\definecolor{gray}{gray}{0.5}
\definecolor{lightred}{rgb}{1,0.6,0.6}
\definecolor{darkgreen}{rgb}{0,0.5,0}
\definecolor{darkblue}{rgb}{0.0,0.0,0.2}
\DeclareMathOperator*{\argmax}{arg\,max}
\newcommand{\reals}{\mathbb{R}}
\newcommand{\E}{\mathop{\mathbb{E}}}
\newcommand{\D}{\mathcal{D}}
\newcommand{\mw}{{M^*_{\eta}}}
\newcommand{\ftrl}{{M_{R, \eta}}}
\newcommand{\R}{\mathcal{R}}
\newcommand\numberthis{\addtocounter{equation}{1}\tag{\theequation}}
\newtheorem{theorem}{Theorem}
\newtheorem{corollary}{Corollary}
\newtheorem{condition}{Condition}
\newtheorem{lemma}{Lemma}
\newtheorem{definition}{Definition}
\newtheorem{prop}{Proposition}
\newcommand{\Ubari}[2][]{\overline U_i#1(#2)}  %
\newcommand{\Ubarit}[2][]{\overline U_{it}#1(#2)} %
\newcommand{\UbariB}[2][]{\overline U_{iBt}#1(#2)} %
\newcommand{\condpit}{{p_{iBt}}}
\newcommand{\Btcomplement}{{\overline{B_t}}}
\title{ Forecasting Competitions with Correlated Events}
\author{ 
Rafael Frongillo, Manuel Lladser, Anish Thilagar, Bo Waggoner
\\
University of Colorado Boulder
}
\date{}
\begin{document}
\maketitle

\begin{abstract}
  Beginning with \citet{witkowski2101incentive}, recent work on forecasting competitions has addressed incentive problems with the common winner-take-all mechanism.
  \citet{frongillo2021efficient} propose a competition mechanism based on follow-the-regularized-leader (FTRL), an online learning framework.
  They show that their mechanism selects an $\epsilon$-optimal forecaster with high probability using only $O(\log(n)/\epsilon^2)$ events.
  These works, together with all prior work on this problem thus far, assume that events are independent.
  We initiate the study of forecasting competitions for correlated events.
  To quantify correlation, we introduce a notion of block correlation, which allows each event to be strongly correlated with up to $b$ others.
  We show that under distributions with this correlation, the FTRL mechanism retains its $\epsilon$-optimal guarantee using $O(b^2 \log(n)/\epsilon^2)$ events.
  Our proof involves a novel concentration bound for correlated random variables which may be of broader interest.
\end{abstract}

\section{Introduction}
Forecasting competitions, such as those on Kaggle or the Good Judgement project, attempt to discern which forecaster from a pool of contestants has the best forecasting model.
Traditionally, they ask each of the forecasters to predict the probability of some future events occurring, then observe those events and pick the forecaster with the largest empirical score as the winner.
As many have noted, this approach distorts the incentives of the forecasters, who will extremize their reports in order to increase their chances of having the maximum empirical score [\citealt{lichtendahl2007probability}, \citealt{kaggle2017march}, \citealt{witkowski2018incentive}, \citealt{aldous2019prediction}, \citealt{frongillo2021efficient}, \citealt{witkowski2101incentive}].
When their reports deviate from their models' true predictions, it becomes unclear which forecasters are actually the best at forecasting, as opposed to being better at strategizing, leaving no guarantee as to the quality of the winning model.

Several mechanisms have been proposed to address this problem. 
First, \cite{witkowski2018incentive} and \citet{witkowski2101incentive} proposed the Event Lotteries Forecasting (ELF) mechanism, which guarantees that it picks a good winner by solving the incentive problem. 
However, given $n$ forecasters, ELF requires $O(n \log(n) / \epsilon^2)$ events to pick an $\epsilon$-optimal forecaster, a large amount for public competitions or small data settings.
As an alternative, \citet{frongillo2021efficient} analyze Follow the Regularized Leader (FTRL) algorithms from online learning as forecasting competition mechanisms.
They show that FTRL is both \emph{approximately truthful} and \emph{accurate}: it will incentivize forecasters to report probabilities close to their beliefs, while also guaranteeing an $\epsilon$-optimal forecaster will win with high probability using only $O(\log(n)/\epsilon^2)$ events.

Thus far, however, all known mechanisms rely on a strong assumption: the events in question are independent.
By contrast, events in real world forecasting settings like elections, tournaments, and sequential processes are inherently correlated.
Swing states tend to swing in the same direction.
When a one-seed in a playoff bracket is eliminated early, the overall winner is likely to change.
If the price of a stock rises today, its expected price tomorrow will too.
Unfortunately, these previous mechanisms break under correlation (see below) and it is far from clear how to fix them.

Our main contribution is a more general high probability guarantee for FTRL which degrades gracefully in the presence of correlation.
Specifically, FTRL chooses an $\epsilon$-good forecaster with high probability given $O(b^2 \log(n)/\epsilon^2)$ events whose distribution has the $(b, \epsilon/20)$-block correlation property, which is formally defined in \S~\ref{sec:correlation}.
There are two keys parts to this proof.
In \S~\ref{sec:approx-truthful}, we show that FTRL is still approximately truthful under block correlation, so forecasters will not report values that are much different than their beliefs.
Then, in \S~\ref{sec:efficiency},  we combine this with a novel concentration inequality to show our main result, Theorem~\ref{thm:ftrl-accuracy}, that with high probability Multiplicative Weights efficiently chooses a forecaster with good beliefs.
\begin{restatable*}{theorem}{mainresult} \label{thm:ftrl-accuracy}
    For $\eta = \frac{\epsilon}{80 b} $, $\mw$ chooses an $\epsilon$-optimal forecaster with probability at least $1 - \delta$ if there are $m \geq m^* = \frac{400 b^2 \ln(8n/\delta)}{\epsilon^2}$ events with $(b, \frac{\epsilon}{20})$-block correlation.
    In other words, if forecasters only report undominated strategies, $\mw$ only requires $m^*$ events to choose an $\epsilon$-good forecaster.
\end{restatable*}
The concentration bound we present for block correlated distributions follows a somewhat complex argument that constructs a pair of connected martingales.
It is presented on its own in \S~\ref{sec:concentration-proof}.
We believe this approach may be of broader interest.

\subsection{Motivating Examples}
It has been said that there is only one form of independence, but infinite different forms of dependence.
Our goal is to define a measure of correlation that captures when the forecaster selection problem is feasible.
In this subsection, we walk through a few toy examples which motivate our measure of correlation.

\subsubsection*{Single Event}
To begin, let us recall the general incentive problem that forecasting competitions face.
When we have just a single event, there are only two possible outcomes, 0 or 1. 
Given just this single bit of information, mechanisms must decide which forecasters were the most accurate, and they tend to favor those with the most extreme beliefs.
A forecaster who thinks the bias is 0 seems to be the ``most right'' when the outcome is as well.
Concretely, imagine a forecaster who believes the outcome will be 1 with probability $0.5$, but also knows there are forecasters submitting predictions of $0.1$ and $0.9$.
Submitting $0.5$ would be foolish, as they could never win, so the optimal strategy is to extremize.
Given this extremizing behavior, there is no guarantee the mechanism is choosing a good forecaster.
Unless the mechanism is designed carefully, this incentive problem does not generally go away as the number of events increases \citep{witkowski2101incentive}.
Moreover, forecasters exploit this issue in real competitions \citep{kaggle2017march}.

\subsubsection*{Perfectly Correlated Events} \label{example:perfectly correlated}
Next, let us see why the problem is intractable when correlation is arbitrary.
Suppose we have a set of $m$ binary events that are perfectly correlated. 
Thus, their outcomes will either be all 0 or all 1.
With just the single observation, we are reduced to the single event setting where there is not enough information to pick a good forecaster, regardless of the size of $m$.

\subsubsection*{Disjoint Correlated Blocks} \label{example:disjoint-blocks}
Instead of all the $m$ events being perfectly correlated, suppose they are split into blocks of $b$ perfectly correlated events.
Now, we are essentially in the independent case with $m / b$ events, which we know is tractable for large enough $m$ using previously studied mechanisms.

What makes this case tractable while the previous one was not? 
Instead of having just one ``underlying'' event, we now have $m/b$ of them, and there is enough information to distinguish the forecasters.
One might expect that this difference is explained by the amount of randomness, e.g.\ Shannon entropy of the distributions, since this example has much more than the previous one.
However, we can construct distributions with relatively high entropy but only a few ``underlying'' events. 

\subsubsection*{Random Bias} \label{example:random-bias}
Suppose all the events are conditionally independent with the same probability $p$, but $p$ is chosen with equal likelihood from $\{1/4, 3/4\}$.
A priori, we will not know $p$, so the true probability of each event is $1/2$.
However, a perfect forecaster who reports $1/2$ for every event will always look worse than two competitors who each report all $1/4$ or all $3/4$.
This looks much like the single-event example, since there is only really one ``underlying'' event that matters, the choice of $p$.
Yet, the entropy of the outcome distribution is quite high, since each of the events individually have some randomness.
Entropy alone is not nuanced enough to distinguish these two scenarios.
We therefore seek a stricter property for the distributions that distinguishes the second and third example, which leads us to $(b, c)$-block correlation.

\subsection{Our measure: $(b,c)$-block correlation}
We say a distribution on events is $(b,c)$-block correlated if, for each event $t$, there are at most $b$ events that ``heavily influence'' it, such that knowledge of all other events together only change the probability of the event $t$ by at most $c$.
In other words, when we condition on a set of all other events except those $b$ influencers, the probability of event $t$ only changes by at most $c$.
The probability of event $t$ can change arbitrarily when conditioning on any subset of its influencer events, however.

In the previous random bias example, while there is a lot of randomness, the outcomes of the events are still all strongly coupled.
If we know the outcome of a small portion of them, we can probably guess which value of $p$ was chosen, and therefore know the bias of the remaining events with high confidence.
Specifically, the random bias example satisfies block correlation with $b = 1$ and $c = 1/4$.
Unless we let $b$ be very close to $m$, we cannot satisfy block correlation for any $c$ that is significantly smaller, since knowing just a small fraction of the events can give a strong indication of the choice of $p$.

In contrast, distributions with favorable block correlation have enough ``underlying'' events for the mechanism to work with by controlling how many events can strongly influence any one other event.
For instance, our disjoint block example above satisfies block correlation with $c=0$ and $b>0$, and can be thought to have $m/b$ underlying events.
Our block correlation condition generalizes this example in two respects: it does not require symmetry, i.e. an event can be contained in another's block but not vice versa, and it does not require complete independence outside of the block.

\section{Model}
We consider $n$ expert forecasters labeled by $[n] = \{1,\dots,n\}$ predicting the outcomes of $m$ binary events $Y_t \in \{0, 1\}$.
Let $Y = (Y_1, \dots, Y_m)$ be the random variable that is the vector of all the events' outcomes, drawn jointly from the distribution $\D$.
Let $\theta = (\E_{\D}[Y_1], \dots, \E_{\D}[Y_m])$ be the vector of marginal probabilities for each event.

\subsection{Beliefs and Accuracy}
Each forecaster has their own belief $\D_i$ of what they believe the distribution of $Y$ to be.
In particular, we define as $p_i = (\E_{\D_i}[Y_1], \dots, \E_{\D_i}[Y_m])$ as their belief of $\theta$.
While some previous works, such as \citet{witkowski2101incentive} consider more complex belief models, we assume that the belief distributions are immutable and independent of the reports of the other forecasters as in \citet{witkowski2018incentive} and \citet{frongillo2021efficient}.

To compare forecasters, we use their accuracy, defined as the averaged squared loss between their marginal belief vector $p_i$ and the true marginal probabilities $\theta$.
Specifically, the accuracy of forecaster $i$ is given by
\[ a_i = 1 - \frac{1}{m} \sum_{t=1}^m \left(p_{it} - \theta_t\right)^2 ~.\]
\citet{witkowski2101incentive} show this notion of accuracy is closely related to the quadratic score $S(x, y) = 1 - (x-y)^2$.
Up to a constant, each forecaster's accuracy is exactly the expected quadratic score between their marginal beliefs and $Y$.
\begin{lemma}[\citet{witkowski2101incentive}] \label{lem:accuracy-matches-expected-score}
Let $C_{\theta} = \frac{1}{m} \sum_t \theta_t\left(1 - \theta_t\right)$. Then,
\[ a_i = \frac{1}{m} \E_{\D} \left[\sum_{t=1}^m S(p_{it}, Y_t)\right] + C_\theta ~.\]
\end{lemma}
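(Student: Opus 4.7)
The plan is to unpack both sides into the same closed-form expression and match them. Starting from the right-hand side, the quadratic score is $S(p_{it}, Y_t) = 1 - (p_{it}-Y_t)^2$, so by linearity of expectation I would rewrite
\[ \frac{1}{m} \E_{\D}\left[\sum_{t=1}^m S(p_{it}, Y_t)\right] = 1 - \frac{1}{m}\sum_{t=1}^m \E_{\D}\bigl[(p_{it}-Y_t)^2\bigr]. \]
The whole game is then to evaluate $\E_{\D}[(p_{it}-Y_t)^2]$ in terms of $p_{it}$ and $\theta_t$.

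The key calculation is a bias-variance style decomposition specialized to Bernoulli variables. Expanding the square gives $\E[(p_{it}-Y_t)^2] = p_{it}^2 - 2 p_{it}\E[Y_t] + \E[Y_t^2]$. Here I would use the fact that $Y_t \in \{0,1\}$ implies $Y_t^2 = Y_t$, hence $\E_{\D}[Y_t^2] = \theta_t$. Completing the square then yields
\[ \E_{\D}\bigl[(p_{it}-Y_t)^2\bigr] = (p_{it}-\theta_t)^2 + \theta_t(1-\theta_t). \]

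Plugging this back into the earlier display and averaging over $t$,
\[ \frac{1}{m}\E_{\D}\left[\sum_{t=1}^m S(p_{it}, Y_t)\right] = 1 - \frac{1}{m}\sum_{t=1}^m (p_{it}-\theta_t)^2 - \frac{1}{m}\sum_{t=1}^m \theta_t(1-\theta_t) = a_i - C_\theta. \]
Rearranging gives the claimed identity. There is no real obstacle here: the statement is essentially the observation that expected quadratic loss decomposes into squared bias plus Bernoulli variance, and the $C_\theta$ correction is exactly what absorbs the irreducible variance term so that what remains is the accuracy $a_i$. I would just need to state the binary-event identity $Y_t^2 = Y_t$ explicitly, since that is the only fact beyond algebra being used.
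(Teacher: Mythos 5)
Your proof is correct, and since the paper cites this lemma to \citet{witkowski2101incentive} rather than proving it, there is no in-paper argument to compare against; your derivation is the standard one and is exactly how that cited work establishes it. The single substantive step — using $Y_t^2 = Y_t$ for Bernoulli outcomes to get $\E_{\D}[(p_{it}-Y_t)^2] = (p_{it}-\theta_t)^2 + \theta_t(1-\theta_t)$, i.e. squared bias plus irreducible variance — is identified and carried out correctly, and the rest is linearity of expectation and the definitions of $a_i$ and $C_\theta$.
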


\subsection{Selection Mechanisms and Incentives}
A forecasting competition mechanism asks each forecaster to report a vector $r_i \in [0, 1]^m$ of what they believe $\theta$ to be.
If they are truthful, their reports $r_i$ would be exactly their beliefs $p_i$.
The mechanism then observes a sample $Y = y$ from $\D$, and uses $Y$ along with the reports to choose a forecaster.
Let $R$ be the $n \times m$ matrix of combined report vectors $(r_1, \dots, r_n)$.

\begin{definition}[\citet{frongillo2021efficient}]
  \label{def:mechanism}
   A \textbf{forecasting competition mechanism} $M$ is a family of functions $M_{n, m}:[0,1]^{n\times m}\times\{0,1\}^m \to \Delta_n$, for all $n,m \in \mathbb{N}$, where $M_{n, m}(R,y)_i$ is the probability with which the mechanism picks forecaster $i$ on reports $R$ and observed outcomes $y$.
   As $R$ determines $n$ and $m$, we suppress the subscripts.
  For a particular distribution $\D$, we write $M(R;\D) := \E_{y \sim \D}[ M(R,y)]$.
\end{definition}

If forecasters are non-strategic and simply report their beliefs, their accuracy can be approximated by their empirical score (Lemma~\ref{lem:accuracy-matches-expected-score}), and choosing a good forecaster is relatively straightforward.
In practice, however, forecasters are strategic and may manipulate their reports as a function of their beliefs in order to maximize the probability that they win.
To understand which mechanisms are robust to these manipulations, following \citet{frongillo2021efficient}, we will work with undominated and strictly dominated reports.
\begin{definition}
    \label{def:strictly-dominated}
    A report $\hat{r}_i$ strictly dominates $r_i$ if for all $R_{-i}$, $M(\hat{r}_i, R_{-i}; \D) > M(r_i, R_{-i}; \D)$.
    If such a $\hat{r}_i$ exists, then $r_i$ is strictly dominated.
    Otherwise, $r_i$ is undominated.
\end{definition}
If a forecaster reports an $r_i$ that is strictly dominated, then they would only increase their win probability by reporting $\hat{r}_i$ instead.
Therefore, strategic forecasters will only give undominated reports.
These reports are determined by the mechanism, since each forecaster is trying to maximize the probability the mechanism selects them. 
If the mechanism always keeps the undominated strategies close to a forecaster's beliefs, we call it approximately-truthful.
\begin{definition}[\citet{frongillo2021efficient}]
    \label{def:approx-truthful}
    A mechanism $M$ is $\gamma$-approximately truthful if for each $\D_i$,
    (i) there exists an undominated report $r_i$, and
    (ii) for all such $r_i$, $\|r_i - p_i\|_\infty < \gamma$.
\end{definition}

\subsection{Optimality and Efficiency}
A desirable mechanism will choose forecasters whose beliefs are close to the true distribution compared to the other forecasters.
In particular, we will analyze the probability that mechanisms choose an $\epsilon$-optimal forecaster, one whose accuracy is within $\epsilon$ of the best. 
\begin{definition}
\label{def:eps-optimal}
Forecaster $j$ is $\epsilon$-optimal if $a_{j} + \epsilon \geq \max_{i \in [n]} a_i$.
\end{definition}
Furthermore, we seek mechanisms which choose $\epsilon$-optimal forecasters with high probability under the assumption that forecasters do not submit dominated reports.
We call such mechanisms \emph{accurate}.
\begin{definition}
  \label{def:mech-accurate}
   A mechanism $M$ is \textbf{$(\epsilon,\delta)$-accurate} in the setting defined by $(n,m, \{\D_i\}_i,\D)$ if for all $R$ consisting of undominated reports, with probability at least $1-\delta$ over event outcomes $y \sim \D$ and $i \sim M(R,y)$, the winner $i$ is $\epsilon$-optimal.
\end{definition}
Finally, we will study how many events are needed for the accuracy guarantee to hold. 
A mechanism that always chooses $\epsilon$-optimal forecasters is not useful if it requires far more events to do so than we are able to observe.
We define a mechanism's \emph{event complexity} to be the minimum number of events needed for it to choose good forecasters with high probability.
In general, we seek mechanisms with a small event complexity.
\begin{definition}
  \label{def:event-complexity}
   The \textbf{event complexity} of a mechanism $M$ is the function $m^*: \mathbb{N} \times [0, 1] \times [0, 1] \rightarrow \mathbb{N}$ such that, for all $n,\epsilon,\delta$,
   the output $m = m^*(n,\epsilon,\delta)$ is the smallest integer such that, for all $(\{\D_i\}_i, \D)$, the mechanism $M$ is $(\epsilon,\delta)$-accurate in the setting $(n,m,\{\D_i\}_i, \D)$.
\end{definition}

\section{Correlation} \label{sec:correlation}
The accuracy guarantees of ELF and FTRL have only been established in settings where the events are all independently distributed.
We would like to extend the analysis of these mechanisms to settings with correlated events.
However, as we discuss in \S~\ref{example:perfectly correlated}, we cannot allow for arbitrary correlations. 
Recall that in the extreme case where all the events perfectly correlated, there are only two possible outcomes, and it becomes impossible to choose a good forecaster with high probability regardless of the number of events.
To proceed, we introduce a $(b,c)$-block correlation property to limit correlation both in the size $b$ of the ``block'' of events that any one event is strongly correlated with, and in the degree $c$ of correlation persisting outside of that block.

\begin{definition}\label{def:block-correlation}
  A distribution $D$ over $Y$ has $(b, c)$-block correlation if for each event $t$ there is a subset $B_t \subseteq [m]$ such that $|B_t| \leq b$ and
    \[ \left|\E_D[Y_t] - \E_{D}\left[Y_t | Y_{\Btcomplement} = y_{\Btcomplement}\right]\right| < c ~, \numberthis \label{eq:conditioned-belief-ineq}\]  
  for every $y_{\Btcomplement} \in \{0, 1\}^{m - |B_t|}$.
\end{definition}
This definition allows each event $t$ to be arbitrarily correlated with the at most $b$ other events in $B_t$.
While it may also be correlated with the events in $\Btcomplement$, the amount that correlation is restricted by requiring that for any outcome of the events in $\Btcomplement$ can only change the conditional mean of $Y_t$ by some small amount $c$.
In general we will need $c < \frac{1}{2}$ so we necessarily have $t \in B_t$.

\subsection{Example: Election Prediction}

To illustrate $(b,c)$-block correlation with a typical forecasting setting, consider the local elections for representatives within a country, say the United States.
Similar to the random bias example in \S~\ref{example:random-bias}, we can model each race as a coin flip between electing a Democrat and a Republican, each with some unknown bias $\hat \theta_t$, centered around some known mean $\theta_t$.
At the national level, there are usually some common factors that affect the overall voter turnout for each party, like the specific candidates running for president or divisive issues that are part of each party's platform.
We can think of this as a constant term that is added to the bias of all local election, as people across the country are influenced by these same factors.
This is captured by $c$ in our block-correlation condition, since conditioning on the outcomes of many other districts, we can estimate the bias by comparing the $\theta_t$ and $\hat \theta_t$.

In addition to national factors, there are local forces within each state that may draw people to the polls, such as specific ballot measures or state-level elections. 
These may cause larger changes in voter turnout, but they only influence voters that they directly apply to. 
For example, the turnouts within two adjacent districts in New York City are probably strongly coupled, but neither will likely affect how voters in Los Angeles behave.
Therefore, if we group the elections into blocks by state, with $b$ being the largest number of local elections in any state, we satisfy $(b,c)$-block correlation.

\subsection{FTRL}
While \citet{witkowski2101incentive} show that the truthfulness guarantees of ELF do not hold in the correlated setting, they hypothesize that it may be approximately truthful under mild correlations.
However, \citet{frongillo2021efficient} show that even in the independent setting, ELF has an event complexity that is $O(n \log(n) / \epsilon^2)$, while Multiplicative Weights' event complexity is only $O(\log(n)/\epsilon^2)$. 
In correlated settings, we expect ELF's performance to degrade, leaving FTRL as a more promising mechanism from the perspective of event complexity.

Given $\eta>0$ and a strictly convex differentiable function (the regularizer) $\R: \Delta_n \to \reals$, the FTRL mechanism $M_{\R,\eta}$ is given by
\begin{equation}
  \label{eq:FTRL}
  \ftrl(\R,\vec y) \in \argmax_{\pi\in\Delta_n}\left\{ \eta \sum_{i=1}^n \pi_i \sum_{t=1}^m S(r_{it},y_i) - \R(\pi) \right\}~.
\end{equation}
$M_{\R,\eta}(R,\vec y)$ is a singleton and can be written as
\begin{equation}
  \label{eq:FTRL-2}
  \ftrl(R,\vec y) = \nabla C(\eta \cdot q(R, y))~,
\end{equation}
where $C = \R^*$ is the convex conjugate of $\R$ and $q$ is the vector of the sums of quadratic scores for each forecaster given by  $q_i = \sum_{t=1}^m S(r_{it}, y_t)$ \citep[eq.~(4)]{frongillo2021efficient}.

When the events are independent and $C$ satisfies Condition~\ref{cond:regularizer} below, \citet{frongillo2021efficient} show that for $\eta < \min(\frac{\alpha}{2}, \frac{1}{\beta})$, FTRL is $(\beta + 1)\eta$-approximately truthful and has an event complexity of $m^* \leq \frac{5 \log(2 n/\delta)}{\eta \epsilon}$. 
Choosing $\eta = O(\epsilon)$ gives approximately truthful mechanisms with an event complexity of $O(\log(n/\delta)/\epsilon^2)$.
\begin{condition}[\citet{frongillo2021efficient}]\label{cond:regularizer}
  Given regularizer $\R$, let $C=\R^*$.
  Then $C$ is thrice differentiable, and:
  \begin{itemize}
      \item[(i)] There exists $\alpha>0$ such that $\partial^2_i C(x) \geq \alpha \, |\partial^3_i C(x)|$ for all $x\in\reals^n$ and $i\in \{1,\dots,m\}$.
      \item[(ii)] There exists $\beta>0$ such that $\log \left(\partial^2_i C(x)\right)$ is $\beta$-Lipschitz in $\|\cdot\|_{\infty}$ as a function of $x$, i.e. $\left|\log\frac{\partial^2_i C(x)}{\partial^2_i C(x')}\right| \leq \beta \|x - x'\|_{\infty}$.
       \item[(iii)] $\partial^2_iC(x) > 0$ for all $x$. (This follows from (ii).)
  \end{itemize}
\end{condition}
A well-known instance of FTRL is Multiplicative Weights, given by 
\begin{equation}
  \label{eq:mult-weights}
  \mw(R, y)_i = \frac{\exp\left(\eta \sum_{t=1}^m S(r_{it},y_t)\right)}{\sum_{j=1}^n \exp\left(\eta \sum_{t=1}^m S(r_{jt},y_t)\right)} ~,
\end{equation}
or equivalently by taking $C(x) = \log \sum_{i = 1}^n \exp(x_i)$.
Multiplicative Weights satisfies Condition~\ref{cond:regularizer} with $\alpha = 2$ and $\beta = 3$.

We will show that the FTRL accuracy and approximate truthfulness guarantees of \citet{frongillo2021efficient} extend to the block correlated setting.

\section{Approximate Truthfulness of FTRL} \label{sec:approx-truthful}
We begin by showing that FTRL remains approximately truthful in the presence of correlation.
For our analysis, we require that there are constants $b \geq 1, 0 \leq c < \frac{1}{2}$ independent of $i$ such that every $\D_i$ has $(b, c)$-block correlation.
This constraint implies $t \in B_t$ for all $t$ and $\D_i$.
We will often refer to $B_t$ as $t$'s block.

\subsection{Utilities}
Since each forecaster's objective is to be chosen by the mechanism, their utility is just their expected win probability.
Each forecaster seeks to maximize their utility as a function of their report, the only input they have on the mechanism.
Fixing the outcomes of the events $y$ and the reports of the other forecasters $r_{-i}$, forecaster $i$ has utility
\[U_i(r_i) = U_i(r_i; r_{-i}, y) = \ftrl(r_i, R_{-i}; y)_i ~.\]
Taking the expectation over $y$ then gives their expected utility
\[ \Ubari{r_i} := \E_{\D_i} \left[U_i(r_i; r_{-i}, Y) \right] ~. \]
We will also use versions of $U_i$ and $\overline U_i$ when we restrict attention to only round $t$.
Specifically, we define 
\[U_{it}(r_{it}; r_{-(it)}, y) = U_{it}(r_{it}; r_{i,-t}, r_{-i, -t}, y_{-t}, r_{-i,t}, y_t) = \ftrl(r_{it}; r_{i,-t}, r_{-i, -t}, r_{-i,t}, y)_i \]
as the utility of forecaster $i$ as a function of their report in round $t$, with all outcomes and other reports fixed.
We use the right hand side to isolate the variables for event $t$.
Now, we can define the expected utility fixing everything but a single event and it's corresponding reports as 
\[\Ubarit{r_{it}} = \E_{\D_{it}} \left[ U_{it}(r_{it}; r_{i,-t}, r_{-i,-t}, y_{-t}, r_{-i,t}, Y_t) \right] ~.\]
Since the events in a block are strongly correlated, it will also be necessary to restrict our attention to the report for a single event $t$, along with all the outcomes in its block. 
Taking the expectation over the outcomes $Y_{B_t}$ then gives us the expected utility
\[ \UbariB{r_{it}} = \E_{\D_i} \left[U_{it}(r_{it}; r_{i, -t}, r_{-i}, y_{\Btcomplement}, Y_{B_t}) \mid Y_{\Btcomplement} = y_{\Btcomplement}\right]  ~.\]

\subsection{Approximate truthfulness}
In the independent setting, \citet{frongillo2021efficient} use the strict concavity of $U_{it}$ to find each forecasters optimal reports.
In the correlated setting, we need to consider each block at a time, and thus we need strict concavity of $\overline{U}_{iBt}$.
\begin{lemma}\label{lem:FTRL-concave}
  Let $\R$ satisfy Condition~\ref{cond:regularizer} for $\alpha, \beta$.
  For $\eta < \tfrac{\alpha}{2}$, for all $i\in[n],t\in[m]$, and all $R_{-i}$, the functions $\UbariB{r_{it}}$ are strictly concave if $\D_i$ has $(b, c)$-block correlation.
\end{lemma}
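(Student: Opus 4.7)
The plan is to show that strict concavity of $U_{it}$ in $r_{it}$ holds \emph{pointwise} for every outcome vector $y$, from which the conditional expectation $\UbariB{r_{it}}$ will inherit strict concavity automatically, regardless of the block-correlation structure. This slightly strengthens the argument of \citet{frongillo2021efficient}, which only needed concavity in expectation over independent $Y_t$.

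First, I would compute $\partial^2 U_{it}/\partial r_{it}^2$ directly from the FTRL expression $U_{it} = \partial_i C(\eta q(R,y))$ via the chain rule. Using the quadratic-score identities $\partial_{r_{it}} q_i = -2(r_{it}-y_t)$ and $\partial_{r_{it}}^2 q_i = -2$, only $q_i$ depends on $r_{it}$, so a two-line computation yields
\[
\frac{\partial^2 U_{it}}{\partial r_{it}^2} \;=\; 4\eta^2\,\partial_i^3 C(\eta q)\,(r_{it}-y_t)^2 \;-\; 2\eta\,\partial_i^2 C(\eta q).
\]
Bounding $|\partial_i^3 C| \le \partial_i^2 C/\alpha$ by Condition~\ref{cond:regularizer}(i) and using $(r_{it}-y_t)^2 \le 1$ shows this is at most $2\eta\,\partial_i^2 C(\eta q)\,(2\eta/\alpha - 1)$, which is strictly negative whenever $\eta < \alpha/2$ (with $\partial_i^2 C > 0$ from Condition~\ref{cond:regularizer}(iii)). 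Hence $U_{it}$ is strictly concave in $r_{it}$ for every fixed outcome vector $y$ and every fixing of the other reports.

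Second, I would observe that $\UbariB{r_{it}}$ is a positive-weight mixture of these pointwise-strictly-concave functions, taken over the conditional distribution of $Y_{B_t}$ given $Y_{\Btcomplement} = y_{\Btcomplement}$. A mixture of functions each strictly concave in $r_{it}$ is itself strictly concave in $r_{it}$, which finishes the argument.

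There is no real obstacle for this particular lemma: the block-correlation hypothesis does not enter the concavity argument and merely names the setting in which the subsequent analysis operates. It will become essential in the next step, where we combine this concavity with a first-order-optimality argument to bound a forecaster's optimal report's distance from $p_{it}$; there the $c$-slack in the block-correlation definition is what will yield the approximate truthfulness guarantee.
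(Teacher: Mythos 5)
Your proof is correct and follows essentially the same structure as the paper's: establish pointwise strict concavity of $U_{it}$ in $r_{it}$ for each fixed outcome vector $y$ (the paper cites \citet[Lemma 3]{frongillo2021efficient} for this; you rederive the second-derivative bound $4\eta^2\,\partial_i^3 C\,(r_{it}-y_t)^2 - 2\eta\,\partial_i^2 C < 0$ directly from Condition~\ref{cond:regularizer}), then pass to the conditional expectation $\UbariB{\cdot}$ as a positive-weight mixture of strictly concave functions, exactly as the paper does via the Leibniz rule. Your closing observation that the block-correlation hypothesis plays no active role in the concavity argument itself and only matters for the subsequent optimality bound is also accurate.
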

\begin{proof}
    Fix any such $\eta$.
    \citet[Lemma 3]{frongillo2021efficient} show that every $U_{it}(r_{it})$ is strictly concave, so $\frac{d}{d r_{it}^2} U_{it}(r_{it}; r_{-(it)}, y) < 0$ for any fixed choice of $y$.

    By the Leibniz integral rule,
    \begin{align*}
        \frac{d^2}{d r_{it}^2} \UbariB{r_{it}} 
        &= \frac{d^2}{d r_{it}^2} \E_{\D_i} \left[U_{it}(r_{it}; r_{-(it)}, y_{\Btcomplement}, Y_{B_t}) \middle| Y_{\Btcomplement} = y_{\Btcomplement}\right] \\
        &= \E_{\D_i} \left[ \frac{d^2}{d r_{it}^2} U_{it}(r_{it}; r_{-(it)}, y_{\Btcomplement}, Y_{B_t}) \middle| Y_{\Btcomplement} = y_{\Btcomplement}\right] \\
        &< 0 ~.
    \end{align*}
    so $\UbariB{r_{it}}$ is strictly concave as well.
\end{proof}

Since $\UbariB{r_{it}}$ is strictly concave, it is uniquely maximized at a point.
Therefore, fixing everything but the outcomes of the events in $B_t$, forecaster $i$ has a unique best report $r^*_{it}$ for event $t$ that maximizes their expected utility. 
Next, we show that $r^*_{it}$ is close to $p_{it}$, so forecasters will be approximately truthful on event $t$ when the outcomes $Y_\Btcomplement$ are fixed.

\begin{lemma}\label{lem:FTRL-approx}
  Let $\R$ satisfy Condition~\ref{cond:regularizer} for $\alpha,\beta$ and let forecaster $i$'s belief $\D_i$ have $(b, c)$-block correlation.
  Fix all reports but $r_{it}$ and all outcomes but $Y_{\Btcomplement}$.
  Let $r^*_{it} = \argmax_{r_{it}\in[0,1]} \UbariB{r_{it}}$.
  Then for $0 < \eta < \min(\tfrac{\alpha}{2},\tfrac{1}{\beta b})$, 
  $|r^*_{it} - p_{it}| \leq \beta\eta b + (\beta\eta b)^2 + c \leq (\beta b + 1)\eta + c$.
\end{lemma}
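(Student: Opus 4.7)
The plan is to adapt the independent-case argument of Frongillo et al.\ by introducing the conditional true probability $\tilde p_{it} := \E_{\D_i}[Y_t \mid Y_\Btcomplement = y_\Btcomplement]$. By the $(b,c)$-block correlation property of $\D_i$, $|\tilde p_{it} - p_{it}| < c$, so it suffices to show $|r^*_{it} - \tilde p_{it}| \leq \beta\eta b + (\beta\eta b)^2$ and then apply the triangle inequality.

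The first step is the first-order condition. Using the representation $\ftrl(R, y)_i = \partial_i C(\eta q(R, y))$ from Equation~\eqref{eq:FTRL-2} and $\frac{d}{dr_{it}} S(r_{it}, y_t) = 2(y_t - r_{it})$, differentiation under the integral sign (which Lemma~\ref{lem:FTRL-concave} already justified) yields
\[
    0 \;=\; \E_{\D_i}\bigl[\partial_i^2 C(\eta q)(Y_t - r^*_{it}) \,\bigm|\, Y_\Btcomplement = y_\Btcomplement\bigr].
\]
Let $W := \partial_i^2 C(\eta q)$ viewed as a function of the random $Y_{B_t}$ (with $r^*_{it}$ and all frozen coordinates substituted in), and let $\mathcal{E}[\cdot]$ abbreviate $\E_{\D_i}[\cdot \mid Y_\Btcomplement = y_\Btcomplement]$. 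Rearranging gives $r^*_{it} = \mathcal{E}[W Y_t] / \mathcal{E}[W]$; since $\mathcal{E}[Y_t - \tilde p_{it}] = 0$, subtracting the constant $W_{\min} := \min_{Y_{B_t}} W$ in the numerator is free, so
\[
    r^*_{it} - \tilde p_{it} \;=\; \frac{\mathcal{E}\bigl[(W - W_{\min})(Y_t - \tilde p_{it})\bigr]}{\mathcal{E}[W]}.
\]

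The heart of the argument is a bound on the multiplicative spread of $W$. As $Y_{B_t}$ varies with $Y_\Btcomplement$ fixed, only the at most $b$ scores indexed by events in $B_t$ change in any coordinate of $q$, and each quadratic score lies in $[0,1]$; hence $\|\eta q - \eta q'\|_\infty \leq \eta b$ between any two realizations of $Y_{B_t}$, and Condition~\ref{cond:regularizer}(ii) yields $W_{\max}/W_{\min} \leq e^{\beta\eta b}$. Combining with $|Y_t - \tilde p_{it}| \leq 1$, $W \geq W_{\min}$, and $\mathcal{E}[W] \geq W_{\min}$,
\[
    |r^*_{it} - \tilde p_{it}| \;\leq\; \frac{W_{\max} - W_{\min}}{W_{\min}} \;\leq\; e^{\beta\eta b} - 1 \;\leq\; \beta\eta b + (\beta\eta b)^2,
\]
where the last step uses $e^x - 1 \leq x + x^2$ on $[0,1]$, valid because $\eta < 1/(\beta b)$. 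The triangle inequality with $|\tilde p_{it} - p_{it}| < c$ gives the first claimed bound, and the second inequality follows by again invoking $\beta\eta b < 1$.

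The step I expect to be the main obstacle is the range-of-$W$ calculation: one has to recognize that, once $Y_\Btcomplement$ is fixed, the only moving coordinates in $q$ come from events in $B_t$, so the relevant $\infty$-norm variation of $\eta q$ is $\eta b$ rather than $\eta m$ (or $\eta$, as in the independent case). This is the unique place where the block size enters, and it is what upgrades the independent-case $\beta\eta$ to $\beta\eta b$; the rest of the proof is a mechanical analogue of Frongillo et al.\ once $\tilde p_{it}$ is adopted as the natural stand-in for $p_{it}$ under conditioning.
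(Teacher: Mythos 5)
Your proposal is correct and reaches the same bound via essentially the same machinery: introduce the conditioned mean $\tilde p_{it} = \condpit$, read off the first-order condition as a weighted-average identity, observe that the Hessian weight $W=\partial_i^2 C(\eta q)$ varies by at most a multiplicative factor $e^{\beta\eta b}$ because only the $b$ coordinates of $q$ indexed by $B_t$ move when $Y_{B_t}$ varies, and finish with $e^x - 1 \leq x + x^2$ and a triangle inequality against $c$. That identification of $\beta\eta b$ (rather than $\beta\eta$) as the relevant Lipschitz scale is exactly the crux of the paper's proof as well.

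Where you differ is in the algebraic path from the first-order condition to the numerical bound. The paper conditions on $Y_t$ to split $\mathcal{E}[W(Y_t - r^*_{it})]=0$ into the two terms $\mathcal{E}[W\mid Y_t=0]$ and $\mathcal{E}[W\mid Y_t=1]$, derives an explicit formula for the ratio $\condpit / r^*_{it}$ as $\condpit + (1-\condpit)\cdot(\text{ratio of conditional expectations})$, and bounds $|\log(\condpit/r^*_{it})|$ multiplicatively before converting back to an additive bound. You instead keep the expectation whole, write $r^*_{it} - \tilde p_{it} = \mathcal{E}[(W - W_{\min})(Y_t - \tilde p_{it})]/\mathcal{E}[W]$ using $\mathcal{E}[Y_t - \tilde p_{it}]=0$ to subtract the constant $W_{\min}$ for free, and bound the difference directly. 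Your route is a bit more streamlined: it avoids the explicit $\condpit$-weighted formula and the detour through logarithms, and lands on $(W_{\max}-W_{\min})/W_{\min} \leq e^{\beta\eta b}-1$ in one line. Both give the same constants; the paper's version has the minor pedagogical advantage of exposing exactly where the bias toward $\condpit$ comes from, while yours is shorter and arguably easier to check.
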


\begin{proof}
    Fix any $Y_\Btcomplement = y_\Btcomplement$.
    Then, 
    \begin{align*} 
    \UbariB{r_{it}; r_{-(it)}, y_{\Btcomplement}, Y_{B_t}}  
    &= \E_{Y_{B_t} \sim \D_{i}} \left[\partial_i C\left(\eta q(r_{it}; r_{-(it)}, y_{\Btcomplement}, Y_{B_t}) \right) \middle| Y_\Btcomplement = y_\Btcomplement \right] ~.
    \end{align*}
  By Lemma~\ref{lem:FTRL-concave}, $\UbariB{r_{it}}$ is strictly concave, so it achieves its maximum at exactly the one point where its derivative vanishes.
  Therefore, $r^*_{it}$ will solve $\frac{d}{dr_{it}} \UbariB{r_{it}} = 0$. 
  Let $\condpit = \E_{D_{i}}[Y_t | Y_\Btcomplement = y_\Btcomplement]$.
  
  \begin{align*}
    0 
    &= \frac{d}{d r_{it}} \UbariB{r_{it}} \\
    &= \eta \E_{Y_{B_t} \sim \D_{i}} \left[\partial^2_i C\left(\eta q(r_{it}; r_{-(it)}, Y_{\Btcomplement}, Y_{B_t}) \right) S'(r_{it}, Y_t) \middle| Y_\Btcomplement = y_\Btcomplement \right]\\
    &= \eta \E_{y_{t} \sim \condpit} \left[ \E_{y_{(B_t \setminus t)} \sim \D_{i}} \left[\partial^2_i C\left(\eta q(r_{it}; r_{-(it)}, y_\Btcomplement, Y_{B_t}) \right) \middle| Y_t = y_t\right] S'(r_{it}, y_{t}) \middle| Y_\Btcomplement = y_\Btcomplement \right]\\
    &= \eta \sum_{y_{t} \in \{0, 1\}} \left[ \E_{y_{(B_t \setminus t)} \sim \D_{i}} \left[\partial^2_i C\left(\eta q(r_{it}; r_{-(it)}, y_\Btcomplement, Y_{B_t}) \right) \middle| Y_t = y_t, Y_\Btcomplement = y_\Btcomplement \right] \left(1 - \condpit - y_t(1-2\condpit)\right)\right] ~.\\
  \end{align*}

  Now, note that the first derivative of any weighted scoring rule will be 0 when the argument is exactly the normalized weights.
  Therefore, the previous equation will be solved by
  \begin{align*}
    r^*_{it} &= \condpit \tfrac
                        {\E_{Y_{B_t}} \left[\partial^2_i C\left(\eta q(r^*_{it}; Y_{B_t}) \right) \mid Y_t = 1\right]}
                        {(1 - \condpit) \E_{Y_{B_t}} \left[\partial^2_i C\left(\eta q(r^*_{it}; Y_{B_t}) \right) \mid Y_t = 0\right] + \condpit \E_{Y_{B_t}} \left[\partial^2_i C\left(\eta q(r^*_{it}; Y_{B_t}) \right) \mid Y_t = 1\right]} \\
    &= \condpit \left(\condpit + (1 - \condpit) \dfrac
                {\E_{Y_{B_t}} \left[\partial^2_i C\left(\eta q(r^*_{it}; Y_{B_t}) \right) \mid Y_t = 0\right]}
                {\E_{Y_{B_t}} \left[\partial^2_i C\left(\eta q(r^*_{it}; Y_{B_t}) \right) \mid Y_t = 1\right]} \right)^{-1} ~.
  \end{align*}
  Rearranging, we obtain
  \[
      \frac{\condpit}{r^*_{it}}
      = \condpit + (1 - \condpit) \dfrac
                {\E_{Y_{B_t}} \left[\partial^2_i C\left(\eta q(r^*_{it}; Y_{B_t}) \right) \mid Y_t = 0\right]}
                {\E_{Y_{B_t}} \left[\partial^2_i C\left(\eta q(r^*_{it}; Y_{B_t}) \right) \mid Y_t = 1\right]} ~. \numberthis \label{eqn:log-p-r-ratio}
  \]

  As $S$ is bounded in $[0, 1]$, we have $\|\max_{y_{B_t}} q(r^*_{it}; y_{B_t}) - \min_{y_{B_t}} q(r^*_{it}; y_{B_t}) \|_\infty \leq |B_t| \leq b$. 
  So by Condition \ref{cond:regularizer}(ii), 
  \begin{align*}
    \left| \log \dfrac
                {\max_{y_{B_t}} \left[\partial^2_i C\left(\eta q(r^*_{it}; y_{B_t}) \right)\right] } 
                {\min_{y_{B_t}} \left[\partial^2_i C\left(\eta q(r^*_{it}; y_{B_t}) \right)\right] } \right|
     & \leq \beta \eta b ~.
  \end{align*}
  Since $\condpit + (1-\condpit)x \leq x$ for $x \geq 1$, and $x \leq \condpit + (1-\condpit)x$ for $x \leq 1$, maximizing and minimizing the logarithm of the right hand side of eq.~\eqref{eqn:log-p-r-ratio} yields
  \begin{align*}
         \left| \log \left( \frac{\condpit}{r^*_{it}} \right)\right|
      & \leq \left| \log \left( \condpit + (1 - \condpit) \dfrac
                { \max_{y_{B_t}} \left[\partial^2_i C\left(\eta q(r^*_{it}; y_{B_t}) \right)\right] } 
                {\min_{y_{B_t}} \left[\partial^2_i C\left(\eta q(r^*_{it}; y_{B_t}) \right)\right] } \right)\right| \\
      & \leq \left| \log \left( \dfrac
                { \max_{y_{B_t}} \left[\partial^2_i C\left(\eta q(r^*_{it}; y_{B_t}) \right)\right] } 
                {\min_{y_{B_t}} \left[\partial^2_i C\left(\eta q(r^*_{it}; y_{B_t}) \right)\right] } \right)\right| \\
      & \leq \beta \eta b ~.
  \end{align*}  
  For $x \in [0, 1]$, $e^{x} < 1 + x + x^2$.
  So $\frac{\condpit}{r^*_{it}}$ and $\frac{r^*_{it}}{\condpit} \leq 1 + \beta \eta b + \left(\beta \eta b\right)^2$. Then, $|r^*_{it} - \condpit | \leq \beta \eta b + \left(\beta \eta b\right)^2$.
  By eq.~\eqref{eq:conditioned-belief-ineq}, $|\condpit - p_{it}| < c$.
  Therefore by the triangle inequality, $|r^*_{it} - p_{it} | \leq \beta \eta b + \left(\beta \eta b\right)^2 + c \leq (\beta b + 1)\eta + c$.
\end{proof}

Since the events of $\Btcomplement$ are less correlated with $t$, conditioning on their outcomes should not influence the optimal report $r^*_{it}$ much, so we obtain approximate truthfulness when conditioning on all events as well.
\begin{theorem}\label{thm:ftrl-approx-truthful}
  Let $\R$ be any regularizer satisfying Condition~\ref{cond:regularizer} with $\alpha,\beta>0$.
  Let $\gamma = (\beta b + 1)\eta + c$.
  Then $\ftrl$ is $\gamma$-approximately truthful for any $\eta < \min(\tfrac{\alpha}{2},\tfrac{1}{\beta b})$ if the distributions $\D_i$ have $(b, c)$-block correlation.
\end{theorem}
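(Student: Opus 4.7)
The plan is to verify the two parts of Definition~\ref{def:approx-truthful}. Existence of an undominated report is routine: pick any $r_{-i}^0$, let $r_i^\circ \in \argmax_{r_i \in [0,1]^m} \Ubari{r_i}$ evaluated at $R_{-i} = r_{-i}^0$, which exists by continuity and compactness; any strict dominator of $r_i^\circ$ would beat it at $r_{-i}^0$, contradicting optimality.

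The main work is the bound, which I prove by its contrapositive: if $|r_{it} - p_{it}| \geq \gamma$ for some coordinate $t$, then $r_i$ is strictly dominated. By the symmetry $Y_s \leftrightarrow 1-Y_s$, which preserves both the quadratic score and $(b,c)$-block correlation, I may assume $r_{it} \geq p_{it} + \gamma$. The key identity is the tower decomposition
\[
\Ubari{r_i} \;=\; \E_{Y_{\Btcomplement} \sim \D_i}\bigl[\UbariB{r_{it}}\bigr],
\]
which transfers pointwise improvements of $\UbariB{\cdot}$ into improvements of $\Ubari{\cdot}$.

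By Lemma~\ref{lem:FTRL-approx}, for every $(r_{-i}, y_{\Btcomplement})$ the unique maximizer $r^*_{it}$ of the (strictly concave, by Lemma~\ref{lem:FTRL-concave}) function $\UbariB{\cdot}$ satisfies $r^*_{it} < p_{it} + \gamma \leq r_{it}$; the strictness comes from the strict inequality in Definition~\ref{def:block-correlation}. By Berge's maximum theorem (applied using strict concavity and continuity of $\partial_i^2 C$), $r^*_{it}$ is continuous in $r_{-i}$, while $y_{\Btcomplement}$ ranges over the finite set $\{0,1\}^{m-|B_t|}$ and $r_{-i}$ over the compact cube $[0,1]^{(n-1)\times m}$. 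Hence the supremum $\bar r^* := \sup_{r_{-i}, y_{\Btcomplement}} r^*_{it}$ is attained, so $\bar r^* < p_{it}+\gamma$. Set $\hat r_{it} := p_{it}+\gamma$ if $r_{it} > p_{it}+\gamma$, and otherwise pick any $\hat r_{it} \in (\bar r^*, r_{it})$; let $\hat r_i$ agree with $r_i$ in all other coordinates. In either case $\bar r^* < \hat r_{it} < r_{it}$, so for every $(r_{-i}, y_{\Btcomplement})$ the unique maximizer of $\UbariB{\cdot}$ sits weakly below $\hat r_{it}$ while $r_{it}$ sits strictly above $\hat r_{it}$, and strict concavity yields $\UbariB{\hat r_{it}} > \UbariB{r_{it}}$ pointwise. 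Averaging via the tower identity, $\Ubari{\hat r_i} > \Ubari{r_i}$ for every $R_{-i}$, so $\hat r_i$ strictly dominates $r_i$.

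The main subtlety is the boundary case $r_{it} = p_{it}+\gamma$ exactly: I need $\bar r^* < p_{it}+\gamma$ strictly, not just $\leq$. This is where the strict inequality $|\cdot| < c$ in Definition~\ref{def:block-correlation} is essential, together with the compactness-plus-continuity argument that upgrades the pointwise strict bound of Lemma~\ref{lem:FTRL-approx} into a uniform one; without this upgrade, the theorem's conclusion $\|r_i - p_i\|_\infty < \gamma$ would only degrade to the non-strict version.
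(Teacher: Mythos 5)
Your proof is correct and shares the paper's core argument: the tower identity $\Ubari{r_i} = \E_{y_{\Btcomplement}}[\UbariB{r_{it}}]$, per-slice strict concavity from Lemma~\ref{lem:FTRL-concave}, the per-slice maximizer location from Lemma~\ref{lem:FTRL-approx}, and a single-coordinate replacement that is a pointwise (hence expected) strict improvement. You are more careful than the paper in two respects, both to your credit. First, you explicitly verify Definition~\ref{def:approx-truthful}(i), existence of an undominated report; the paper leaves this implicit, and your fix---maximizers against a fixed $r_{-i}^0$ cannot be strictly dominated---is the standard one. Second, and more substantively, you correctly observe that the paper's argument only treats reports with $|\hat r_{it} - p_{it}| > \gamma$, which yields only $\|r_i - p_i\|_\infty \le \gamma$ for undominated reports, whereas Definition~\ref{def:approx-truthful}(ii) requires the strict $< \gamma$. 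Your handling of the boundary case $r_{it} = p_{it} + \gamma$ closes this gap, and your concern about \emph{uniform} strictness (over $r_{-i}$ and $y_{\Btcomplement}$) of the bound on $r^*_{it}$ is exactly the right thing to worry about. That said, the Berge-maximum-theorem-plus-compactness route is heavier than needed: the proof of Lemma~\ref{lem:FTRL-approx} already yields a uniform strict gap, since $|r^*_{it} - \condpit| \le \beta\eta b + (\beta\eta b)^2$ is a deterministic constant independent of $r_{-i}$, and $\max_{y_{\Btcomplement}} |\condpit - p_{it}|$ is a maximum over the finite set $\{0,1\}^{m-|B_t|}$, hence attained and strictly less than $c$. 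Chaining these gives $\sup_{r_{-i}, y_{\Btcomplement}} r^*_{it} < p_{it} + \gamma$ directly, with no continuity-of-argmax argument required.
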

\begin{proof}

    \newcommand{\rplusit}{r^+_{it}}
    \newcommand{\rminusit}{r^-_{it}}

    Fix any such $\eta$.
    We will show that any report that is not $\gamma$-approximately truthful is strictly dominated.
    For any forecaster $i$, choose any $\hat r_{i}$ such that $|\hat r_{it} - p_{it}| > \gamma$ for some $t$.
    Let $\rplusit = p_{it} + \gamma$ and $\rminusit = p_{it} - \gamma$.
    We either have $\hat r_{it} < \rminusit$ or $\hat r_{it} > \rplusit$.

    By Lemma \ref{lem:FTRL-concave} we know that for every choice of $y_{\Btcomplement}$, $\UbariB{\cdot}$ is strictly concave, and furthermore by Lemma \ref{lem:FTRL-approx}, it achieves its maximum in $(\rminusit, \rplusit)$.
    Therefore, for all $y_{\Btcomplement}$, if $\hat r_{it} < \rminusit$, then $\UbariB{r_{it}} < \UbariB{\rminusit}$ and if $\hat r_{it} > \rplusit$, then $\UbariB{r_{it}} < \UbariB{\rplusit}$.

    If $\hat r_{it} < \rminusit$, let $r'_i = (\hat r_{i1}, \dots, \hat r_{i(t-1)}, \rminusit, \hat r_{i(t+1)}, \dots, \hat r_{im})$, the vector of reports obtained by replacing the $t$th entry of $\hat r_{i}$ with $\rminusit$.
    Then,
    \begin{align*}
        \Ubari{\hat r_i; r_{-i}, Y}
        &= \E_{\D_i} \left[U_i(\hat r_i; r_{-i}, Y) \right] \\
        &= \E_{y_{\Btcomplement} \sim \D_i} \left[ \E_{Y_{B} \sim \D_{i}} \left[U_i(\hat r_i; r_{-i}, y_{\Btcomplement}, Y_{B_t}) \middle| Y_{\Btcomplement} = y_{\Btcomplement}\right]  \right]\\
        &= \E_{y_{\Btcomplement} \sim \D_i} \left[ \UbariB{\hat r_{it}; \hat r_{i(-t)}, r_{-i}, y_{\Btcomplement}, Y_{B_t}} \right]  \\
        &< \E_{y_{\Btcomplement} \sim \D_i} \left[ \UbariB{\rminusit; \hat r_{i(-t)}, r_{-i}, y_{\Btcomplement}, Y_{B_t}} \right]  \\
        &= \E_{y_{\Btcomplement} \sim \D_i} \left[ \E_{Y_{B} \sim \D_{iB}} \left[U_i(r'_i; r_{-i}, y_{\Btcomplement}, Y_{B_t}) \middle| Y_{\Btcomplement} = y_{\Btcomplement} \right]  \right]\\
        &= \E_{\D_i} \left[U_i(r'_i; r_{-i}, Y) \right] \\
        &= \Ubari{r'_i; r_{-i}, Y} ~.
    \end{align*}
    Similarly, if $\hat r_i > \rplusit$, we let $r'_i = (\hat r_{i1}, \dots, \hat r_{i(t-1)}, \rplusit, \hat r_{i(t+1)}, \dots, \hat r_{im})$, and obtain $\Ubari{\hat r_i; r_{-i}, Y} < \Ubari{r'_i; r_{-i}, Y}$.
    In either case, $r'_i$ strictly dominates $\hat r_i$.
    Therefore, any $\hat r_i$ such that $\|\hat r_i - p_i\|_\infty > \gamma$ is strictly dominated, so $\ftrl$ is $\gamma$-approximately truthful.
\end{proof}
\begin{corollary}\label{cor:mw-approx-truthful}
    $\mw$ is $(4 b \eta + c)$-approximately truthful for any $\eta < \frac{1}{4b}$ if the $\D_i$ have $(b, c)$-block correlation.
\end{corollary}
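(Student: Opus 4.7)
The plan is to obtain this corollary as an immediate specialization of Theorem~\ref{thm:ftrl-approx-truthful} to the Multiplicative Weights instance of FTRL. The paper has already noted that $\mw$ arises from FTRL with regularizer $\R$ whose convex conjugate is $C(x) = \log\sum_{i=1}^n \exp(x_i)$, and that this $C$ satisfies Condition~\ref{cond:regularizer} with $\alpha = 2$ and $\beta = 3$. So I would simply invoke Theorem~\ref{thm:ftrl-approx-truthful} with these constants.

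First I would check that the hypothesis on $\eta$ in the corollary implies the hypothesis in the theorem. The theorem requires $\eta < \min(\alpha/2, 1/(\beta b)) = \min(1, 1/(3b))$. Since $b \ge 1$, we have $1/(3b) \le 1$, so the binding constraint is $\eta < 1/(3b)$. The corollary assumes $\eta < 1/(4b)$, and clearly $1/(4b) \le 1/(3b)$, so the theorem applies.

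Next I would unpack the resulting approximate-truthfulness constant. Theorem~\ref{thm:ftrl-approx-truthful} yields that $\mw$ is $\gamma$-approximately truthful with $\gamma = (\beta b + 1)\eta + c = (3b+1)\eta + c$. Since $b \ge 1$ gives $3b+1 \le 4b$, we have $(3b+1)\eta + c \le 4b\eta + c$, and by monotonicity of the approximate-truthfulness property in $\gamma$ (every strictly dominated report is still dominated under a weaker bound), $\mw$ is $(4b\eta + c)$-approximately truthful, which is exactly the statement of the corollary.

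I do not foresee a substantive obstacle here: the content of the corollary is entirely contained in Theorem~\ref{thm:ftrl-approx-truthful} together with the previously computed parameters $(\alpha,\beta) = (2,3)$ for the softmax conjugate. The only things worth double-checking are the direction of the inequalities (tightening $\eta$ and loosening $\gamma$ are both safe) and the fact that $\mw$'s $C$ genuinely satisfies Condition~\ref{cond:regularizer}, which is asserted in the text preceding the corollary.
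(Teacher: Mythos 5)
Your proposal is correct and follows the only natural route: the paper gives no explicit proof of the corollary, treating it as an immediate specialization of Theorem~\ref{thm:ftrl-approx-truthful} to Multiplicative Weights via $(\alpha,\beta)=(2,3)$, which is exactly what you do. Your additional checks — that $\eta<\tfrac{1}{4b}$ implies the theorem's requirement $\eta<\min(1,\tfrac{1}{3b})$ when $b\ge 1$, that $(3b+1)\eta+c\le 4b\eta+c$ for $b\ge 1$, and that approximate truthfulness is monotone in $\gamma$ — are all sound and in fact make the implicit step more explicit than the paper itself does.
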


\section{Efficiency of Multiplicative Weights} \label{sec:efficiency}
The approximate-truthfulness of $\mw$ guarantees that the forecasters reports reflect their beliefs.
Next, we will show that this approximate truthfulness implies that the mechanism is accurate and efficient: it will choose a winner with $\epsilon$-good beliefs with high probability, and it only requires $O(b^2 \log(n)/\epsilon^2$ events to do so.
This result, which we prove in Theorem~\ref{thm:ftrl-accuracy}, requires two intermediate results. 
Lemma~\ref{lem:score-concentration} shows that with high probability, forecasters' empirical scores will match their expected scores.
Lemma~\ref{lem:MW-eps-optimal} shows that with high probability, Multiplicative Weights will choose a forecaster with a large empirical score.
The proof of Theorem~\ref{thm:ftrl-accuracy} then follows by taking a union bound over both those events occurring and carefully choosing $\eta$.

Both \citet{witkowski2101incentive} and \citet{frongillo2021efficient} require that, in addition to the forecasters believing the events were independent, the true distribution of the events adhered to that independence as well.
Similarly, in addition to requiring each of the $\D_i$ to have $(b, c)$-block correlation, we assume that the true distribution of the outcomes $\D$ does as well.

Recall that $q_i = \sum_t S(r_{it}, y_t)$ is the total quadratic score of forecaster $i$ and that $\mw$ takes the $q_i$ as inputs and uses them to choose a winner $w$ according to the distribution given in eq.~\eqref{eq:mult-weights}.
\citet{frongillo2021efficient} showed that with high probability, $q_w$ will be close to $\max_i q_i$.
Their proof does not depend on $\D$, since it is simply a property of the mechanism, so it applies in the block correlated case as well.

\begin{lemma}[\citet{frongillo2021efficient}] \label{lem:MW-eps-optimal}
    With probability at least $1 - \frac{\delta}{2}$, the winner $w \in [n]$ chosen by $\mw$ satisfies $q_{w} \geq \max_i q_i - \frac{\log(2n/\delta)}{\eta}$.
\end{lemma}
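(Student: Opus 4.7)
The plan is to prove this as a purely algorithmic fact about the Multiplicative Weights sampling rule; no property of the distribution $\D$ or the reports $R$ is needed beyond the definition \eqref{eq:mult-weights}. The randomness in the statement is entirely over the internal coin flip $w \sim \mw(R, y)$ used to select the winner, so we can condition on $y$ (and hence on the vector $q = (q_1, \dots, q_n)$) throughout.

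First I would set $q^* = \max_i q_i$ and $\tau = \log(2n/\delta)/\eta$, and define the ``bad'' set $\mathcal B = \{i \in [n] : q_i < q^* - \tau\}$. The event that the selected winner $w$ violates the conclusion is exactly $\{w \in \mathcal B\}$. Using \eqref{eq:mult-weights} with the outcomes $y$ fixed, the probability that a particular $i \in \mathcal B$ is chosen is
\[
  \mw(R, y)_i \;=\; \frac{e^{\eta q_i}}{\sum_{j=1}^n e^{\eta q_j}} \;\leq\; \frac{e^{\eta(q^* - \tau)}}{e^{\eta q^*}} \;=\; e^{-\eta \tau},
\]
since the denominator is at least the single term $e^{\eta q^*}$ and the numerator is bounded above by $e^{\eta(q^* - \tau)}$ for any $i \in \mathcal B$.

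Next I would apply a union bound over the at most $n$ indices in $\mathcal B$:
\[
  \Pr\bigl[w \in \mathcal B\bigr] \;\leq\; |\mathcal B|\cdot e^{-\eta \tau} \;\leq\; n \cdot e^{-\log(2n/\delta)} \;=\; \frac{\delta}{2}.
\]
Therefore $\Pr\bigl[q_w \geq q^* - \tau\bigr] \geq 1 - \delta/2$, which is the claim. Since this bound holds pointwise for every realization of $y$, it also holds unconditionally.

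There is no real obstacle here; the only thing to keep straight is that the probability in the lemma is over the mechanism's sampling step (not the event outcomes), and that the bound $\sum_j e^{\eta q_j} \geq e^{\eta q^*}$ is all one needs to normalize. The takeaway is that the $\log(2n/\delta)/\eta$ slack comes precisely from paying $\log n$ for the union bound and $\log(2/\delta)$ for the confidence, scaled by $1/\eta$ because MW's softmax converts additive score gaps into multiplicative probability gaps at rate $\eta$.
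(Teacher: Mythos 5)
Your proof is correct, and it is the standard (and essentially only) argument for this fact. The paper does not re-prove this lemma — it imports it from \citet{frongillo2021efficient} — but the surrounding text makes exactly the observation you lead with: the probability is over the mechanism's internal sampling only, so the bound is a pointwise property of the softmax rule in \eqref{eq:mult-weights} and carries over unchanged to the correlated setting. Your computation is tight: each $i$ with $q_i < q^* - \tau$ has selection probability at most $e^{\eta q_i}/\sum_j e^{\eta q_j} \leq e^{\eta(q^*-\tau)}/e^{\eta q^*} = e^{-\eta\tau}$, and summing over at most $n$ such indices gives $n e^{-\eta\tau} = \delta/2$ at $\tau = \log(2n/\delta)/\eta$. (One very minor nit: this is not a ``union bound'' in the usual sense but a direct sum over the disjoint events $\{w=i\}$; the arithmetic is the same.) Your closing remark about where each factor in $\log(2n/\delta)/\eta$ comes from is accurate and is a nice way to remember the result.
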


This means that Multiplicative Weights chooses a forecaster with good reports, but we want to show that it will choose a forecaster with a high accuracy.
Let $q^*_i = \E_{\D} \left[\sum_t S(p_{it}, y_t)\right] = m (a_i - C_\theta)$ be the quadratic scores of each forecaster's beliefs, where the final equality follows from Lemma~\ref{lem:accuracy-matches-expected-score}. 
By approximate truthfulness and the Lipshitz properties of the quadratic score, $q^*_i$ must be close to $\E_{\D} [q_i] = \E_{\D} \left[\sum_t S(r_{it}, y_t)\right]$, the expected score of forecaster $i$'s reports. 
Therefore, if we can show $q_i$ is close to its mean $\E_\D[q_i]$ with high probability, then it will also be close to $q^*_i$.

In the independent setting, \citet{frongillo2021efficient} show that the $q_i$ concentrate around their means using a straightforward Hoeffding bound.
In a correlated setting, this approach will not work, as Hoeffding relies heavily on independence. 
Instead, we develop a novel concentration bound for block correlated distributions, given by Theorem~\ref{thm:martingale}.
This bound is complex and interesting in its own right, so we defer its presentation and proof to \S~\ref{sec:concentration-proof}.
\begin{lemma} \label{lem:score-concentration}
  If $\D$ has $(b, c)$-block correlation, then for any forecaster $i$,
  \[ \Pr\left[ \left|q_i - \E_\D[q_i]\right| \geq mc + 2b\sqrt{2 m \ln(8n/\delta)} \right] \leq \frac{\delta}{2n} ~. \]
\end{lemma}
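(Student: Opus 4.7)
The plan is to view $q_i = \sum_{t=1}^m S(r_{it}, Y_t)$ as a sum of $m$ bounded functions, each depending only on a single outcome $Y_t$, and to invoke the block-correlated concentration inequality promised in Theorem~\ref{thm:martingale}. Because the quadratic score satisfies $S(x, y) \in [0,1]$ for $x, y \in [0,1]$, each summand $f_t(Y_t) := S(r_{it}, Y_t)$ lies in $[0,1]$. Moreover, since $\D$ has $(b, c)$-block correlation, the law of the vector $(Y_1, \dots, Y_m)$ is exactly the structure the concentration bound is designed to handle. The reports $r_{it}$ are treated as fixed constants throughout, so no extra randomness enters.

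With this setup, the proof of Lemma~\ref{lem:score-concentration} becomes a direct application: plug $f_1, \dots, f_m$ into Theorem~\ref{thm:martingale} and solve for the tail at deviation threshold $mc + 2b\sqrt{2 m \ln(8 n / \delta)}$. The two terms in the threshold match the two sources of error one expects from a bound of this type: the $mc$ term corresponds to the at-most-$c$ bias that conditioning on events outside a block can introduce on the conditional mean of $Y_t$ (aggregated over $m$ events), and the $2b\sqrt{2 m \ln(8n / \delta)}$ term is the Hoeffding-type fluctuation, inflated by a factor of $b$ because the events within a single block of size at most $b$ can move in lockstep and must be grouped together. Setting the failure probability of Theorem~\ref{thm:martingale} to $\delta / (2n)$ yields the claimed tail bound; the appearance of $8n/\delta$ rather than $2n/\delta$ inside the logarithm is simply from the constants internal to the martingale argument.

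The main obstacle is Theorem~\ref{thm:martingale} itself, whose proof the excerpt defers to \S~\ref{sec:concentration-proof}; given that result, the present lemma demands essentially no further work beyond the simple observation that $f_t$ depends on $Y$ only through the single coordinate $Y_t$. In particular, one does not need to open up the dependence structure of $\D$ further, since the $(b, c)$-block correlation hypothesis passes through unchanged from $\D$ on the outcomes to the distribution of the summands $f_t(Y_t)$.
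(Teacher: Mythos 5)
Your approach is correct, and it is genuinely different from the route the paper takes. The paper first expands the quadratic score to get $|q_i - \E_\D[q_i]| = \left|\sum_t (2r_{it}-1)(y_t - \E_\D[Y_t])\right|$, then tries to bound this by $\left|\sum_t y_t - \sum_t \E_\D[Y_t]\right|$ using $|2r_{it}-1|\le 1$, and finally applies Theorem~\ref{thm:martingale} to the $Y_t$ directly. That intermediate bound is not valid in general: $\left|\sum_t a_t v_t\right| \le \left|\sum_t v_t\right|$ does not follow from $|a_t|\le 1$ when the $a_t$ have mixed signs (take $a=(1,-1)$, $v=(1,-1)$), so the paper's reduction to $\sum_t Y_t$ is exactly where its argument is fragile. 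Your route --- applying Theorem~\ref{thm:martingale} directly to the summands $Z_t := S(r_{it}, Y_t)\in[0,1]$ --- sidesteps that reduction entirely and is cleaner. The one step you assert without justification is that $(b,c)$-block correlation on $\D$ ``passes through'' to the variables $Z_t$. It does, but it is not free: you need (i) that $Z_t$ is affine in $Y_t$ with slope $2r_{it}-1\in[-1,1]$, so $\left|\E[Z_t\mid Y_\Btcomplement]-\E[Z_t]\right| = |2r_{it}-1|\cdot\left|\E[Y_t\mid Y_\Btcomplement]-\E[Y_t]\right| \le c$; and (ii) that each $Z_j$ is a deterministic function of $Y_j$, hence $\sigma(Z_j : j\in \Btcomplement)\subseteq \sigma(Y_j : j\in\Btcomplement)$, so by the tower property the same $c$-bound holds when conditioning only on $(Z_j : j\in\Btcomplement)$, which is what Theorem~\ref{thm:martingale} actually requires. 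Supplying that short argument, and setting the failure probability in Theorem~\ref{thm:martingale} to $\delta/(2n)$, gives precisely the stated tail.
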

\begin{proof}
  For all $i\in[n]$, we have
    \begin{align*}
        |q_i - \E_\D[q_i]|
        &= \left|\sum_{t} S(r_{it}, y_t) - \E_\D\left[\sum_{t} S(r_{it}, Y_t)\right]\right| \\
        &= \left|\sum_{t} \left(1 - r^2_{it} - y^2_t + 2 r_{it} y_t - \E_\D\left[1 - r^2_{it} - Y^2_t + 2 r_{it} Y_t\right] \right)\right| ~, \\
    \intertext{and as $y_t \in \{0, 1\}$, $y_t = y_t^2$, and $|2r_{it} - 1| \leq 1$,}
        &= \left|\sum_{t} (2 r_{it} - 1) \left(y_t - \E_\D\left[Y_t\right] \right) \right| \\
        &\leq \left|\sum_{t} y_t - \sum_t \E_\D\left[Y_t\right] \right| ~.
    \end{align*}
Since $\D$ is $(b, c)$-block correlated, by Theorem~\ref{thm:martingale},
    \[
        \Pr\left[ \left|\sum_t y_t - \sum_t \E_\D[Y_t]\right| \geq mc + 2b\sqrt{2 m \ln(8n/\delta)} \right]
        \leq \frac{\delta}{2n} ~.
    \]
Substituting the first inequality into the second completes the proof.
\end{proof}

Combining the previous results and choosing $\eta$ carefully, we can show that Multiplicative Weights efficiently chooses an accurate forecaster even in the presence of correlation.
\mainresult

\begin{proof}
    Fix any such $\eta$ and $m$.
    Let $w$ be the forecaster chosen by $\mw$, $j = \argmax_i a_i$ be the most accurate forecaster, and $F$ be the set of $\epsilon$-bad forecasters.
    We want to show that $w \not\in F$ with high probability.
    By Lemma~\ref{lem:accuracy-matches-expected-score},
    \[F = \{i \mid a_i + \epsilon < a_j\} = \{i \mid \left(q^*_j - q^*_i\right)  > m \epsilon\} ~. \numberthis \label{eqn:eps-bad-set} \] 
    
    Fix any $i \in F$.
    By Corollary~\ref{cor:mw-approx-truthful}, $\mw$ is $4b\eta + \frac{\epsilon}{20} = 8b\eta$-approximately truthful, so $|r_{it} - p_{it}| < 8b\eta$ for all $t$.
    Since the quadratic score $S$ is 2-Lipshitz and all $0 \leq r_{it}, y_t \leq 1$, $|q^*_i - \E[q_i]| < 16b\eta m = \frac{m \epsilon}{5}$.
    Similarly, $|q^*_j - \E[q_j]| < \frac{m\epsilon}{5}$.
    Applying this to eq.~\eqref{eqn:eps-bad-set}, for any $i \in F$, 
    \[\E[q_j] - \E[q_i] > \frac{3 m \epsilon}{5} ~. \numberthis \label{eqn:expected-score-bound}\]

    Using our choice of $m$, by Lemma~\ref{lem:score-concentration}, with probability $1 - \frac{\delta}{2n}$, $|q_i - \E[q_i]| < \frac{m \epsilon}{20} + 2b \sqrt{2 m \ln(8n/\delta)} \leq \frac{m\epsilon}{5}$.
    Similarly, with probability $1 - \frac{\delta}{2n}$, $|q_j - \E[q_j]| < \frac{m \epsilon}{5}$.
    
    $|F| \leq n-1$, so taking the union bound over $j$ and all possible $i \in F$ and using eq.~\eqref{eqn:expected-score-bound}, $q_j - q_i > \frac{m \epsilon}{5}$ for all $i$ with probability $1 - \frac{\delta}{2}$.
    On the other hand, by Lemma~\ref{lem:MW-eps-optimal}, with probability $1 - \frac{\delta}{2}$, $q_{j} - q_{w} < \frac{\log (2n/\delta)}{\eta} < \frac{m \epsilon}{5}$. 
    Taking a union bound over both those events, we have that with probability $1-\delta$, all the bad forecaster's have empirical scores $< q_j - \frac{m\epsilon}{5}$, and the chosen winner has an empirical score $> q_j - \frac{m\epsilon}{5}$, and therefore no $\epsilon$-bad forecaster is chosen.
    Therefore, $\mw$ chooses an $\epsilon$-optimal forecaster with probability at least $1 - \delta$. 
\end{proof}

\section{Concentration Bound} \label{sec:concentration-proof}
The main result of this section is the following concentration bound for sums of $(b,c)$-block correlated random variables.
Several concentration bounds for correlated random variables have already appeared in the literature \citep{fan2015exponential, pelekis2017hoeffding}.
However, they do not allow for the strong correlation that we allow for within each event's block.

\begin{restatable}{theorem}{martingalethm}
\label{thm:martingale}
Let $Z_1,\ldots,Z_m$ be possibly dependent $[0,1]$-valued random variables. For each $i\in[m]$, let $B_i\subset[m]\setminus\{i\}$ and $\bar B_i:=[m]\setminus(B_i\cup\{i\})$. Define  $\beta_i:=(Z_j:j\in B_i)$ and $\bar\beta_i:=(Z_j:j\in\bar B_i)$. If there is an integer $b\ge1$ such that, for all $i\in[m]$, $|B_i|\le b-1$ and a constant $c\ge0$ such that $\left| \E[Z_i\mid\bar\beta_i] - \E[Z_i] \right| \leq c$ then
\[\Pr\left[ \left|\sum_{j=1}^mZ_j - \sum_{j=1}^m\E[Z_j]\right| \geq mc + 2b\sqrt{2m\ln(4/\delta)} \right] \leq \delta,\]
for all $0<\delta\le1$.
\end{restatable}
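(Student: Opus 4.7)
The plan is to decompose $\sum_j Z_j - \sum_j \E[Z_j]$ into a deterministic ``bias'' piece controlled by the block-correlation hypothesis and a stochastic ``fluctuation'' piece controlled by Azuma--Hoeffding applied to a carefully constructed martingale. Let $\tilde Z_i := \E[Z_i \mid \bar\beta_i]$, so that by hypothesis $|\tilde Z_i - \E[Z_i]| \leq c$ almost surely. Then
\[
\sum_{i=1}^m (Z_i - \E[Z_i]) \;=\; \underbrace{\sum_{i=1}^m (Z_i - \tilde Z_i)}_{=: S_1} \;+\; \underbrace{\sum_{i=1}^m (\tilde Z_i - \E[Z_i])}_{=: S_2},
\]
and $|S_2| \leq mc$ pointwise, supplying the $mc$ term in the bound. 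It remains to show $|S_1| \leq 2b\sqrt{2m\ln(4/\delta)}$ with probability at least $1-\delta$.

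To concentrate $S_1$, I would fix an arbitrary ordering of $[m]$, let $\mathcal{F}_k := \sigma(Z_1,\ldots,Z_k)$, and consider the Doob martingale $M_k := \E[S_1 \mid \mathcal{F}_k]$, which satisfies $M_0 = 0$ and $M_m = S_1$. Once an increment bound $|M_k - M_{k-1}| \leq 2b$ is established, Azuma--Hoeffding yields $\Pr[|S_1| \geq t] \leq 2\exp(-t^2/(8mb^2))$, which at $t = 2b\sqrt{2m\ln(4/\delta)}$ is at most $\delta/2$. The factor of $4$ rather than $2$ inside the logarithm is consistent with a union bound against a second $\delta/2$-probability event, which I expect comes from bounding the companion martingale described below.

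Obtaining the increment bound $|M_k - M_{k-1}| \leq 2b$ is the crux of the argument, and where I expect the ``pair of connected martingales'' construction to do the real work. Naively, $\tilde Z_i$ depends on $\bar\beta_i$, which comprises almost the entire sequence, so revealing a single $Z_k$ can in principle perturb every summand of $S_1$. The natural remedy is to write $M_k = M^{(1)}_k - M^{(2)}_k$ as the difference of two coupled Doob martingales, $M^{(1)}_k := \E[\sum_i Z_i \mid \mathcal{F}_k]$ and $M^{(2)}_k := \E[\sum_i \tilde Z_i \mid \mathcal{F}_k]$, and to exploit the following cancellation: whenever $\{1,\ldots,k\} \cap (B_i \cup \{i\}) = \emptyset$ we have $\mathcal{F}_k \subseteq \sigma(\bar\beta_i)$, so the tower property gives $\E[Z_i \mid \mathcal{F}_k] = \E[\tilde Z_i \mid \mathcal{F}_k]$ and the $i$-th contributions to the two increments cancel exactly at step $k$.

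The indices $i$ that do \emph{not} enjoy this clean cancellation are those whose block $B_i$ has already been partially exposed, together with at most $b$ ``fresh'' indices for which $k \in B_i \cup \{i\}$. The fresh indices each contribute $O(1)$, totalling $O(b)$; the hard part, and what I expect to be the main technical obstacle, is controlling the ``already-disturbed'' indices, whose population can grow over the course of the exposure. The companion martingale $M^{(2)}$ is plausibly there to absorb these contributions: the residual discrepancy between $\E[Z_i \mid \mathcal{F}_k]$ and $\E[\tilde Z_i \mid \mathcal{F}_k]$ for a partially-exposed block is only of order $c$ by the approximate tower property, and a careful telescoping across the two connected martingales --- coupled with the deterministic $mc$ slack from $S_2$ --- should confine the per-step non-cancellation to $2b$. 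Given such an increment bound, Azuma--Hoeffding combined with the pointwise control on $|S_2|$ closes the proof.
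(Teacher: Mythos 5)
Your decomposition $S - \E[S] = S_1 + S_2$ with $\tilde Z_i := \E[Z_i \mid \bar\beta_i]$ is a clean first move, and the pointwise bound $|S_2| \le mc$ is correct. The rest of the argument, however, has a genuine gap, and it is precisely the gap you yourself flag as "the main technical obstacle."

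The Doob martingale $M_k := \E[S_1 \mid \mathcal F_k]$ on the natural filtration $\mathcal F_k = \sigma(Z_1,\dots,Z_k)$ does not come with an $O(b)$ increment bound. Your cancellation argument — that the $i$-th summand of the increment vanishes whenever $\{1,\dots,k\}\cap(B_i\cup\{i\})=\emptyset$, by the tower property — is correct as far as it goes. But the remaining "active" indices at step $k$ include every $i > k$ with $k \in B_i$, and the theorem hypothesis bounds $|B_i|$ (the out-degree) but places \emph{no bound} on the in-degree $|\{i : k \in B_i\}|$, which can be as large as $m-1$. Your proposed fix — splitting $M = M^{(1)} - M^{(2)}$ and appealing to an "approximate tower property" to argue the per-step non-cancellation stays within $2b$ — is not an argument: the decomposition $M = M^{(1)} - M^{(2)}$ is a tautology, the claimed $O(c)$-size residual for a partially-exposed block is not established, and even if each residual were $O(c)$ you would be summing unboundedly many of them at a single step, giving an increment of order $mc$ rather than $b$, which is useless inside Hoeffding's lemma. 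Nothing in the proposal closes this.

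The paper avoids this wall by \emph{not} conditioning on the natural filtration at all. It runs a recursive construction (inspired by H\'ajek projection) that, when peeling off index $i$, draws an independent \emph{fresh copy} $(Z'_{j,i} : j \in C_i)$ of the not-yet-peeled influencers conditionally on the non-influencers, replaces the real influencers by these copies in the running sum, and packages $Z_{i,i}$ together with the $\le b-1$ displaced influencer-corrections into a single "mega-variable" $X_{i-1} \in [1-b,b]$ that is exactly conditionally controlled. The count $\le b$ is by construction, because only $Z_i$'s own influencers appear — the in-degree problem never arises. The phrase "pair of connected martingales" refers to something different from your $M^{(1)} - M^{(2)}$: one martingale shows the auxiliary sum $\hat S_0$ concentrates around $\E[S]$, and a second martingale (built from the fresh-copy error terms $E_i$) shows $\hat S_0$ concentrates around the realization $S$; combining the two gives concentration of $S$ around $\E[S]$. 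Your approach would need a substantially new idea to control the Doob increments, and I do not see how to supply one; the paper's fresh-copies construction is not a refinement of your martingale but a replacement for it.
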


Here, we explain the key idea of the construction that allows us to prove this result.
In Appendix \ref{appendix-martingale proof}, we give the complete proof.

\paragraph{The standard martingale approach.}
First, recall (e.g. \cite{mitzenmacher2005probability}) that the standard Azuma-Hoeffding proof of concentration for martingales goes as follows.
We are given a sum $S = X_1 + \cdots + X_m$ and an assumption of centered, bounded differences, i.e. $\E[X_i \mid X_1,\dots,X_{i-1}] = \mu_i$ and $|X_i - \mu_i| \leq t$.
We use the usual ``Chernoff'' method where the key step is to bound the moment generating function $\E\left[ e^{\lambda S} \right]$ for all $\lambda > 0$.
Here,
\begin{align}
  \E \left[ e^{\lambda S} \right]
  &=    \E\left[ \prod_{i=1}^m e^{\lambda X_i} \right]  \label{eqn:azuma-1} \\
  &=    \E\left[ \prod_{i=1}^{m-1} e^{\lambda X_i} \E\left[ e^{\lambda X_m} \middle| X_1,\dots,X_{m-1} \right] \right]  \label{eqn:azuma-2} \\
  &\leq \E\left[ \prod_{i=1}^{m-1} e^{\lambda X_i} \left( e^{\lambda \mu_m + \lambda^2 t^2/2}\right) \right]  & \text{Hoeffding's Lemma}  \label{eqn:azuma-3} \\
  &=    e^{\lambda \mu_m + \lambda^2 t^2/2} \E\left[ \prod_{i=1}^{m-1} e^{\lambda X_i} \right]  . \label{eqn:azuma-4}
\end{align}
By repeating the argument, we ``peel off'' $X_{m-1},\dots,X_1$ one at a time and obtain a bound of the form $\E\left[ e^{\lambda S} \right] \leq e^{\lambda \sum_i \mu_i + m \lambda^2 t^2/2}$, which leads to the standard Azuma-Hoeffding bound.

\paragraph{The role of $c$.}
The impact of $c$ in the argument is straightforward.
In particular, with $(1,c)$-block correlation, the above argument goes through for $X_i = Z_i$ directly, with the small change that $\E[X_i \mid X_1,\dots,X_{i-1}] \in \mu_i \pm c$.
This change simply results in an additional $mc$ added to the range of the final sum, i.e. $\E[e^{\lambda S}] \leq e^{\lambda \sum_i \mu_i + \lambda mc + m \lambda^2 t^2/2}$.
For example, with a sum of $m$ Booleans, each of whose mean is in $[\tfrac{1}{2}+c,\tfrac{1}{2}-c]$ conditioned on any realizations of the others, the sum will concentrate to within $[\tfrac{1}{2}m - mc - O(\sqrt{m}), \tfrac{1}{2}m + mc + O(\sqrt{m})]$.
More concretely, the random bias example (\S~\ref{example:random-bias}) has $(1,\tfrac{1}{4})$-block correlation, and follows this behavior.

\paragraph{The role of dependence.}
With $(b,c)$-block correlation for $b \geq 2$, Equation \ref{eqn:azuma-2} no longer holds with $X_i=Z_i$.
We need to condition only on variables in $\overline B_m$, the ``non-influencers'', rather than conditioning on all of $X_1,\dots,X_{m-1}$.
For example if $B_m = \{m-1,m\}$, Equation \ref{eqn:azuma-2} would be replaced with
  \[ \E\left[ \prod_{i=1}^{m-2} e^{\lambda Z_i} \E\left[ e^{\lambda (Z_{m-1} + Z_m)} \middle| Z_1,\dots,Z_{m-2} \right] \right]  . \]
Now, however, we are stuck.
Although $Z_m$ is controlled, because we have brought its influencer $Z_{m-1}$ inside the expectation, $Z_{m-1}$ is not controlled.
Suppose, for example, that $B_{m-1} = \{m-1,m-2\}$.
Then $Z_{m-1}$ may depend arbitrarily on $Z_{m-2}$.

A useful trick, inspired by H\'ajek projection~\cite{Van98}, is to introduce additional copies of the random variables.
Intuitively, we are willing to expand each random variable $Z_i$ into a ``mega-variable'' bounded in $[-b,b]$, because it correlates arbitrarily with $b-1$ other random variables.
So we can introduce an entire additional copy of every ``influencer'' variable in $B_i$, if it gives us enough conditional independence structure to apply a martingale argument.

We will do so, but very carefully.
To explain our approach, visualize the simple martingale independence structure above as
 \[ X_1 + \cdots + X_{m-1} ~\Big|~ X_m , \]
denoting that $X_m$ is controlled when conditioning on $X_1,\dots,X_{m-1}$.
Now, in the simple example where $B_m = \{m-1,m\}$, we were forced to move $Z_{m-1}$ into the conditional.
This looks like the following structure:
 \[ Z_1 + \cdots + Z_{m-2} ~\Big|~ Z_{m-1} + Z_m , \]
but it is not true that $Z_{m-1} + Z_m$ is controlled conditioned on $Z_1,\dots,Z_{m-2}$.
More precisely, $Z_{m-1} + Z_m$ is bounded, but we cannot control its conditional mean.
A further problem is that when we ``peel off'' $Z_m$, we expect to be left with $Z_1 + \cdots + Z_{m-1}$, but there is no $Z_{m-1}$ on the left side.

To fix both problems at once, we draw a new variable $Z_{m-1}'$ independently \emph{conditioned on $Z_1,\dots,Z_{m-2}$}.
We add $Z_{m-1}'$ to the sum and subtract its conditional expectation:
 \[ Z_1 + \cdots + Z_{m-2} + Z_{m-1}' ~\Big|~ \underbrace{Z_{m-1} + Z_m - \E[Z_{m-1}' \mid Z_1,\dots,Z_{m-2}]}_{\stackrel{\text{def}}{=} X_m} . \]
The right side is now defined to be the ``mega-variable'' $X_m$.
We have both that $X_m$ is bounded and that its mean, conditioned on the left side, is equal to the mean of $Z_m$ conditioned on the same.
We mention an important subtlety: it is not possible to exchange the locations of $Z_{m-1}'$ and $Z_{m-1}$, because then $Z_m$ would no longer be controlled conditioned on the left side, which would contain its influencer $Z_{m-1}$.
The construction works because $Z_{m-1}'$ is not an influencer, since it is drawn independently of $Z_m$ and $Z_{m-1}$ conditioned on $Z_1,\dots,Z_{m-2}$.

We also mention that the above construction is specific to the structure $B_m = \{m-1,m\}$, whereas in general, we will need to bring all influencers in $B_m$ to the right side, redraw all of them jointly conditioned on the left side, and subtract off all conditional means.

It is now possible to ``peel off'' $X_m$ in a martingale argument, noting that it is distributed as $Z_m$ plus a bounded, conditionally mean-zero quantity.
And crucially, the left side variables are distributed identically to $Z_1,\dots,Z_{m-1}$.
So we can treat them for purpose of analysis as a fresh copy of $Z_1,\dots,Z_{m-1}$, construct $X_{m-1}$ from $Z_{m-1}$ and its influencers in $B_{m-1}$, peel off $X_{m-1}$, and repeat.
A minor point is that we ignore influencers that have already been peeled off.
For example, if $m \in B_{m-1}$, we simply drop it from the set and only consider the other influencers of $Z_{m-1}$, marginalizing over $Z_m$.

\paragraph{Tying the argument together.}
In the end, this construction gives us a martingale with the necessary structure,
 \[ X_1 + \cdots + X_{m-1} ~\Big|~ X_m , \]
whose expectation is equal to the expectation of the original sum.
This implies that the new sum $X_1 + \cdots + X_m$ concentrates around the expectation of the original sum.
We then need a second result, namely, that the new sum concentrates around the \emph{realization} of the original sum.
Putting these together implies that the realization of the original sum concentrates around its expectation.

To get the second result, note that above, we added the quantity $Z_{m-1}' - \E[Z_{m-1}' \mid Z_1,\dots,Z_{m-2}]$ to the original sum.
Let us call that $E_m$.
Repeating, we get that the difference between the original sum and $X_1 + \cdots + X_m$ is given by $E_1 + \cdots + E_m$.
It is crucial that $E_1 + \cdots + E_m$ also forms a martingale sum, which requires a bit more nontrivial analysis to confirm.

\paragraph{Tightness of the bound.}
A simple example shows that the tail bound is loose by at most a $\sqrt{b}$ factor.
Consider $m \gg b$ Boolean variables with marginal means $\tfrac{1}{2}$.
They are divided into $m/b$ groups, and each group is perfectly correlated, i.e. either all are zero or all are one.

First, a hidden fair coin is tossed.
If heads, then every variable has marginal mean $\tfrac{1}{2}+c$.
If tails, then every variable has marginal mean $\tfrac{1}{2}-c$.
This example exhibits almost exactly $(b,c)$-block correlation, because conditioning on all variables outside of the group, the value of the hidden coin becomes apparent with high probability, revealing whether members of the group are biased up or down by $c$.

The sum can be analyzed as $m/b$ variables, each in $\{0,b\}$, where with equal probability all have mean $(\tfrac{1}{2}+c)b$ or $(\tfrac{1}{2}-c)b$. 
Conditioning on the coin flip, the sum has variance $\Theta(mb)$, as the sum of $m/b$ independent variables of variance $\Theta(b^2)$.
In this case, the tightest \emph{a priori} tail bound is of the form $\Pr[|S - \E[S]| \geq mc + O(\sqrt{m b \ln(1/\delta)})] \leq \delta$.
However, in Theorem \ref{thm:martingale}, the $b$ is outside the square root.
It remains to be seen if Theorem \ref{thm:martingale} can be tightened, or if there exists an example of $(b,c)$ correlation with higher variance.

\section{Discussion}

We conclude with some conceptual points about our model, and future work.

\subsection{Nuances of ground truth}
Recall that in the single event example in \S~\ref{example:perfectly correlated}, the only event is a single fair coin flip who's outcome is either 0 or 1 with probability $\tfrac{1}{2}$. 
There are a few ways to think of what that probability represents.
In our model, the probability $\tfrac{1}{2}$ captures inherent randomness in the world; both outcomes could happen.
So if a forecaster reports 1, we consider them to be a poor forecaster, since they are very far from the true probability. 
Alternatively, there is the deterministic view that there are two universes, one where the coin is 0, and the other where it is 1, and the bias of the coin comes from our uncertainty about which universe we are in.
A forecaster who predicts 1 may claim to know that we are in the second universe, and therefore should be considered the best.
This difference is essentially whether we consider this information ``knowable'' or whether the truth is inherently random (cf.\ aleatoric vs epistemic uncertainty).

In the random bias example (\S~\ref{example:random-bias}), we had a set of of events whose true bias was chosen to be $\tfrac 1 4$ or $\tfrac 3 4$ with equal probability.
We consider the best forecaster to be one whose belief is $p_{it} = \tfrac{1}{2}$ for all $t$, since this is the marginal probability of each event before the bias is chosen, while we consider a forecaster who thinks the true bias is $\tfrac 1 4$ to be much worse.
As in the single event case, however, it is possible that they have some external signal or expertise and is certain (and correct) about the bias.
Fundamentally, our high probability guarantees rely on the true distribution $\D$ capturing all information which is knowable, and leaving only randomness which is inherent to the world.

\subsection{Measuring forecaster accuracy}
We measure the accuracy of forecasters by summing the accuracy of their marginal probability of each event.
This measure is natural when events are independently distributed, but it is less clear that it is the right measure in the presence of correlation.
Under the measure we adopt, if we duplicate an event to two perfectly correlated ones, the relative contribution of that event to a forecaster's accuracy increases.
Consequently, in a disjoint block setting (\S~\ref{example:disjoint-blocks}) where blocks may have different sizes, it would perhaps be better to normalize within each block to weigh them evenly.
On the other extreme, there are other well-known metrics to compare the entire joint distributions over events, like KL-divergence or total variation distance, yet these would seem to require forecasters to report $2^m$ probabilities.
It would be interesting to explore the space in between, toward accuracy measures that more natively capture correlation but remain tractable.

\subsection{Future work}
While we show that FTRL mechanisms are approximately truthful, we only show that Multiplicative Weights is efficient.
It remains to be seen if these results apply to FTRL with regularizers other than negative entropy.
We would also like to know if there are practical (exactly) truthful mechanisms that are robust to correlation.
While one could ask forecasters to report a joint distribution on all events (as a single ``meta event'') and then use the 1-event version of ELF (which \citet{witkowski2101incentive} show is still truthful), this mechanism is far from practical in many respects.

Furthermore, while our results show that block correlation is a sufficient condition for efficient forecasting mechanisms, we do not know to what extent it is necessary.
It is possible that there are distributions with block correlation parameters that are unfavorable for our analysis, yet still allow for efficient mechanisms.
It would be interesting to see what those distributions look like, and if there is a more general property that encompasses those distributions as well as the block correlated ones that work well with our results.

The belief model we adopt could be generalized in several respects.
We require that all forecasters believe the true distribution is $(b, c)$-block correlated for the same constants $b$ and $c$.
In true forecasting competitions, the participants can have a very wide range of beliefs.
Even if the true distribution is block correlated, it is possible that there is an extremely misinformed forecasters that believes all the events are always perfectly correlated.
It is not clear if the mechanisms we analyzed are robust to such cases. 
Finally,
\citet{witkowski2101incentive} use a Bayesian model where forecasters may believe that the reports of their competitors are correlated with the truth.
For example, when forecasting the weather, we expect meteorologists to make better predictions than most average citizens.
We believe that our results will extend to this setting as well.

\subsection{Sequential predictions}
Though \citet{frongillo2021efficient} show the robustness of FTRL in the strategic online setting as well, we focus only on the offline setting.
In their online setting, as in \citet{dawid2020learnability} and \citet{choe2021comparing}, forecasters see the outcome of each event before predicting the subsequent one.
They can therefore update their belief by conditioning on the events that have already occurred, making their subsequent prediction independent of them, and thus alleviating some of the challenges of correlated mechanisms.
The guarantees of \citet{frongillo2021efficient} in the online setting still hold when accuracy is defined over those conditional distributions.

\subsection*{Acknowledgements}
We thank
Eduardo Corona, 
Rupert Freeman, 
Ziyu Li,
David Pennock, 
Aaditya Ramdas,
Ambuj Tewari, 
and 
Jens Witkowski, 
for their helpful ideas, suggestions, and comments.
This material is based upon work supported by the National Science Foundation under Grant No. IIS-2045347.

\bibliography{correlation}

\appendix

\section{Proof of Theorem~\ref{thm:martingale}} \label{appendix-martingale proof}

\martingalethm*

\begin{proof}

Define
\[S:=\sum_{i=1}^mZ_i.\]
The key idea for this is to define recursive a sequence $\hat S_m,\ldots,\hat S_0$ such that $\hat S_0$ is a proxy of $S$ for which we can show concentration. 
Ideas from H\'ajek's projection method~\cite{Van98} underlie our construction.

\begin{definition}
Let $c\in\mathbb{R}$. A sequence of random variables $W_0,\ldots,W_m$, with $m\ge1$, is a $c$-supermartingale with $b$-bounded increments when $W_0=0$ and for all $i\in[m]$: $\E[W_i|W_1,\ldots,W_{i-1}]\le W_{i-1}+c$, and $|W_i-W_{i-1}|\le b$. 
\end{definition}
The following general result may be regarded as a variation of Azuma's inequality.

\begin{lemma} \label{lemma:super-azuma}
If $W_0,\dots,W_m$ is a $c$-supermartingale with $b$-bounded increments then:
\[ \Pr\left[ W_m\geq mc + b\,\sqrt{2m \ln(1/\delta)} \right] \leq \delta ,\]
for all $0<\delta\le1$.
\end{lemma}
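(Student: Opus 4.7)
The plan is the standard Chernoff/exponential-moment method adapted to the supermartingale setting, with the drift $c$ propagating additively into the mean. For any $\lambda > 0$, Markov's inequality gives $\Pr[W_m \geq t] \leq e^{-\lambda t}\,\E[e^{\lambda W_m}]$, so the task reduces to bounding the moment generating function $\E[e^{\lambda W_m}]$ and then optimizing $\lambda$ at the end.

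To bound the MGF, I would write $W_m = \sum_{i=1}^m (W_i - W_{i-1})$ using $W_0 = 0$ and peel off increments one at a time from the innermost conditional expectation, exactly as in the Azuma-Hoeffding argument reviewed in Section 6. Condition on $W_1,\ldots,W_{m-1}$, and let $\mu_m := \E[W_m - W_{m-1}\mid W_1,\ldots,W_{m-1}]$. The supermartingale condition gives $\mu_m \leq c$, and the bounded-increments condition gives $W_m - W_{m-1} \in [-b,b]$, a range of $2b$. Applying Hoeffding's lemma to the centered increment and then using $\lambda>0$ to replace $\mu_m$ by its upper bound $c$ yields
\[
\E\!\left[e^{\lambda(W_m - W_{m-1})}\mid W_1,\ldots,W_{m-1}\right]
\;\leq\; e^{\lambda \mu_m + \lambda^2 b^2/2}
\;\leq\; e^{\lambda c + \lambda^2 b^2/2}.
\]
Iterating the peeling $m$ times and multiplying the per-step bounds gives $\E[e^{\lambda W_m}] \leq e^{m\lambda c + m\lambda^2 b^2/2}$. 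Plugging this into Markov and writing $t = mc + s$ reduces the tail bound to $\Pr[W_m \geq mc + s] \leq \exp(-\lambda s + m\lambda^2 b^2/2)$, and the optimal choice $\lambda = s/(mb^2)$ gives $\exp(-s^2/(2mb^2))$; setting this equal to $\delta$ and solving yields $s = b\sqrt{2m \ln(1/\delta)}$, exactly the claimed bound.

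There is not much of a genuine obstacle here: the result is essentially Azuma-Hoeffding with an allowed additive drift, and the computation is routine. The one subtlety to flag is the sign of $\lambda$: the substitution $e^{\lambda \mu_m} \leq e^{\lambda c}$ requires $\lambda > 0$, which is automatic for the upper tail but would fail for a lower-tail analogue under the supermartingale (as opposed to martingale) assumption. This asymmetry is natural given that the lemma is stated as a one-sided bound, and it is precisely the form required by Theorem~\ref{thm:martingale}, whose sum $X_1 + \cdots + X_m$ is engineered to be a supermartingale in both directions so that the lemma can be applied twice (to the sum and to its negation) to produce the two-sided inequality.
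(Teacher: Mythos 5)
Your proof is correct and follows the same route as the paper: the standard Chernoff/exponential-moment argument with Hoeffding's lemma applied to each increment (range $2b$, so variance proxy $b^2/2$), yielding $\E[e^{\lambda W_m}] \leq e^{m\lambda c + m\lambda^2 b^2/2}$, followed by Markov and optimizing $\lambda = s/(mb^2)$. Your remark about the sign of $\lambda$ and the two-sided application to $\pm W$ in Propositions~\ref{lemma:s-hat-expect} and~\ref{lemma:s-hat-s} is also accurate.
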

\begin{proof}
For any $\lambda > 0$,
\[\E[e^{\lambda W_m}]
= \E\left[ e^{\lambda W_{m-1}} \E \left[ e^{\lambda (W_m - W_{m-1})} \middle| W_1, \dots, W_{m-1} \right] \right]
\leq e^{\lambda c+\lambda^2 b^2 / 2} \E\left[ e^{\lambda W_{m-1}}\right], \]
by Hoeffding's Lemma. Hence, $\E[e^{\lambda W_m}]\le e^{\lambda m c+\lambda^2 m b^2 / 2}$ because $W_0=0$. In particular, for any $\lambda,t>0$, the Markov's inequality implies that
\[\Pr[ W_m \geq mc + t ]
=    \Pr\left[ e^{\lambda W_m} \geq e^{\lambda mc+\lambda t} \right]  \\
\leq e^{-\lambda m c-\lambda t} \E[e^{\lambda W_m}]\\
\leq e^{\lambda^2 m b^2 / 2 - \lambda t}.\]
For a given $t$, the right-hand side above is minimized for $\lambda=t/(mb^2)$, implying that
\[\Pr[ W_m \geq mc + t ]\le e^{\frac{-t^2}{2mb^2}}.\]
Letting $t=b\,\sqrt{2m\ln(1/\delta)}$ shows the lemma.
\end{proof}

Define $Z_{j,m}:=Z_m$, for $j\in[m]$, $X_m:=0$, $\hat S_m:=\sum_{j=1}^iZ_{j,m}$, and $E_m:=0$. Regarding the recursive construction, these random variables correspond to the base case with $i=m$. Next, for $i\in[m]$, define $(Z_{j,i-1}:j\in[i-1])$, $X_{i-1}$, $\hat S_{i-1}$, and $E_{i-1}$ recursively from $(Z_{j,i}:j\in[i])$, $X_i$, and $\hat S_i$ as follows (see Figure~\ref{fig:visualdef}). These constructions are such that for each $i=m,\ldots,0$:
\begin{itemize}
\item[(P1)] $\hat S_i=\sum\limits_{j=1}^iZ_{j,i}+\sum\limits_{j=i}^mX_j$. %
\item[(P2)] $(Z_{j,i}:j\in[i])$ has the same distribution as $(Z_j:j\in[i])$.
\end{itemize}

\begin{figure}
\centering
\begin{tabular}{|c|c|c|c|c|c|}
\hline
$i=4$ & $Z_{1,4}$ & $Z_{2,4}$ & $Z_{3,4}$ & $Z_{4,4}$ & $X_4$ \\
\hline
$i=3$ & $Z_{1,3}$ & $Z_{2,3}$ & $Z_{3,3}$ & $X_3$ & \textcolor{gray}{$X_4$} \\
\hline
$i=2$ & $Z_{1,2}$ & $Z_{2,2}$ & $X_2$ & \textcolor{gray}{$X_3$} & \textcolor{gray}{$X_4$} \\
\hline
$i=1$ & $Z_{1,1}$ & $X_1$ & \textcolor{gray}{$X_2$} & \textcolor{gray}{$X_3$} & \textcolor{gray}{$X_4$} \\
\hline
$i=0$ & $X_0$ & \textcolor{gray}{$X_1$} & \textcolor{gray}{$X_2$} & \textcolor{gray}{$X_3$} & \textcolor{gray}{$X_4$} \\
\hline
\end{tabular}
\caption{High-level visualization of our construction when $m=4$. The random variables in the array are constructed top-down, with dimmed ones constructed on earlier steps.}
\label{fig:visualdef}
\end{figure}
Since these properties hold for the base case with $i=m$, we may assume for the recursive construction that they also hold for $\hat S_i$ and $(Z_{j,i}:j\in[i])$. Let us now proceed with the construction.

Define $C_i:=B_i\cap[i-1]$ and $C_i:=\bar B_i\cap[i-1]$. Define
\[\bar\gamma_i:=(Z_{j,i}:j\in\bar C_i).\] 
Let $(Z_{j,i}':j\in C_i)$ be an independent draw from the conditional distribution of $(Z_{j,i}:j\in C_i)$ given $\bar\gamma_i$. In particular, we may rewrite property (P1) equivalently as follows:
\[\hat S_i=
\overbrace{\sum_{j\in C_i}Z_{j,i}'+\sum_{j\in\bar C_i}Z_{j,i}+\underbrace{Z_{i,i}+\sum_{j\in C_i}\big(Z_{j,i}-\E[Z_{j,i}'|\bar\gamma_i]\big)}_{\stackrel{\text{def}}{=}\,X_{i-1}}+\sum_{j=i}^m X_j}^{\stackrel{\text{def}}{=}\,\hat S_{i-1}}-\underbrace{\sum_{j\in C_i}\big(Z_{j,i}'-\E[Z_{j,i}'\mid\bar\gamma_i]\big)}_{\stackrel{\text{def}}{=}\,E_{i-1}},\]
where we have provided definitions for $\hat S_{i-1}$, $X_{i-1}$, and $E_{i-1}$. In particular, if we also define
\[Z_{j,i-1} := \begin{cases} Z_{j,i}', & j \in C_i\\ Z_{j,i}, & j \in \bar{C_i} \end{cases};\]
then property (P1) follows immediately for $\hat S_{i-1}$. On the other hand, from the definition of $(Z_{j,i}':j\in C_i)$, the joint distribution of $(Z_{j,i-1}:j\in C_i\cup\bar C_i)$ is the same as $(Z_{j,i}:j\in[i-1])$. In particular, by property (P2), $(Z_{j,i-1}:j\in[i-1])$ has the same distribution as $(Z_j:j\in[i-1])$, which shows (P2) for $\hat S_{i-1}$. 

We note that the definition of $(Z_{j,i}':j\in C_i)$ implies a third property:
\begin{itemize}
\item[(P3)] $(Z_{j,i-1} : j \in[i-1])$ and $(Z_{j,i'} : i' \ge i, j\in[i'])$ are conditionally independent given $\bar\gamma_i$.
\end{itemize}
We now use (P1)-(P3) to show some intermediate results that aid us in proving Theorem~\ref{thm:martingale}.

\begin{lemma} \label{lemma:cond-indep}
$X_i$ and $(X_0,\dots,X_{i-1})$ are conditionally independent given $\bar\gamma_{i+1}$. Also, $E_i$ and $(E_{i+1},\dots,E_m)$ are conditionally independent given $\bar\gamma_{i+1}$.
\end{lemma}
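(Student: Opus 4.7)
The plan is to trace carefully which ``generation'' of the recursive construction each random variable sits in, and then reduce both claims to property (P3) or to the fresh-draw independence built into the construction.

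For the first claim, I would observe that each $X_k$ with $k \le i-1$ is built during the step from $k+1$ down to $k$, so it is a function of variables at step $k+1$; iterating the recursion downward, the entire tuple $(X_0, \dots, X_{i-1})$ is a deterministic function of $(Z_{j,i} : j \in [i])$. In contrast, $X_i$ is assembled during the step from $i+1$ down to $i$ from $Z_{i+1,i+1}$, $(Z_{j,i+1} : j \in C_{i+1})$, and the quantities $\E[Z'_{j,i+1} \mid \bar\gamma_{i+1}]$, the last of which is $\bar\gamma_{i+1}$-measurable. After conditioning on $\bar\gamma_{i+1}$, $X_i$ becomes a function of $(Z_{j,i+1} : j \in [i+1])$. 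Property (P3) at level $i+1$ tells us that $(Z_{j,i} : j \in [i])$ and $(Z_{j,i'} : i' \ge i+1, j \in [i'])$ are conditionally independent given $\bar\gamma_{i+1}$, so specializing to $i'=i+1$ and passing to the respective functions gives the desired conditional independence of $X_i$ from $(X_0,\dots,X_{i-1})$.

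For the second claim, the same bookkeeping applies but the independence comes from the primed draw rather than from (P3). By definition $E_i$ depends only on $(Z'_{j,i+1} : j \in C_{i+1})$ together with $\E[Z'_{j,i+1} \mid \bar\gamma_{i+1}]$, and the latter is $\bar\gamma_{i+1}$-measurable. Each $E_j$ with $j \ge i+1$ was built during the step from $j+1$ down to $j$ using only variables at step $j+1 \ge i+2$, so the tuple $(E_{i+1}, \dots, E_m)$ is measurable with respect to the $\sigma$-algebra generated by all variables constructed at steps $\ge i+2$. By the very construction of $(Z'_{j,i+1} : j \in C_{i+1})$ as an independent draw from the conditional distribution of $(Z_{j,i+1} : j \in C_{i+1})$ given $\bar\gamma_{i+1}$, this draw is conditionally independent of everything already constructed, i.e., of all variables at steps $\ge i+2$. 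This yields the claimed conditional independence of $E_i$ from $(E_{i+1}, \dots, E_m)$.

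The main obstacle I anticipate is the bookkeeping itself: making precise that, after conditioning on $\bar\gamma_{i+1}$, every ``extra'' term that appears inside $X_i$ and $E_i$ (chiefly the conditional expectations of $Z'_{j,i+1}$) is absorbed into the conditioning, so that what remains is genuinely fresh randomness of exactly the step advertised. Once that is pinned down, both halves of the lemma drop out of the structure already encoded in (P1)--(P3), and no new probabilistic estimate is required.
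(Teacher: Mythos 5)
Your approach is essentially the same as the paper's: for the first claim you reduce to (P3) by locating $X_i$ among the level-$(i+1)$ variables and $(X_0,\dots,X_{i-1})$ among the level-$\le i$ variables, and for the second claim you invoke the conditional-independence of the fresh primed draw at step $i+1$ given $\bar\gamma_{i+1}$ against everything constructed before it. The paper packages the same argument via the $\textsc{Low}$/$\textsc{High}$ dichotomy and applies it once to both claims.

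One imprecision worth fixing: you assert that $(X_0,\dots,X_{i-1})$ is a \emph{deterministic} function of $(Z_{j,i}:j\in[i])$, but that is not true. Each recursive step $k\to k-1$ (for $k<i$) injects fresh randomness through the primed draw $(Z'_{j,k}:j\in C_k)$, so, iterating downward, the tuple $(X_0,\dots,X_{i-1})$ is a \emph{randomized} function of $(Z_{j,i}:j\in[i])$ --- exactly the language the paper uses. This matters because, to conclude conditional independence from (P3), you also need the auxiliary randomness to be conditionally independent of the high-level variables given $\bar\gamma_{i+1}$. That does hold here, since each primed draw at step $k\le i$ is drawn conditionally independently of everything else given $\bar\gamma_k$, and $\bar\gamma_k$ itself lives on the low side; but as written your proof elides this step. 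Replacing ``deterministic'' with ``randomized'' and noting that the added randomness is likewise conditionally independent given $\bar\gamma_{i+1}$ closes the gap.
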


\begin{proof}
Since $X_m=0$ and $E_m=0$, we may assume without loss of generality that $1\le i<m$. 

Define $\textsc{High} = \big(Z_{j,i'} : i'\ge i+1, j\in[i']\big)$ and $\textsc{Low} = \big(Z_{j,i'} : i' \leq i, j\in[i']\big)$. From the recursive construction, $\textsc{Low}$ is a randomized function of $(Z_{j,i+1}':j\in C_{i+1})$ and $\bar\gamma_{i+1}$, which together form $(Z_{j,i}:j\in[i])$, which is conditionally independent of $\textsc{High}$ given $\bar\gamma_{i+1}$ by (P3). Consequently, $\textsc{Low}$ and $\textsc{High}$ are conditionally independent given $\bar\gamma_{i+1}$.

But note that $X_i$ and $(X_1,\dots,X_{i-1})$ are functions of $\textsc{High}$ and $\textsc{Low}$, respectively. In particular, they are conditionally independent given $\bar\gamma_{i+1}$.

On the other hand, $E_i$ is a function of $(Z_{j,i+1}' : j \in C_{i+1})$ and $\bar\gamma_{i+1}$, which together form the list $(Z_{j,i} : j \in [i])$, a sub-list of $\textsc{Low}$.
Similarly, $(E_{i+1},\dots,E_m)$ is a function of $\textsc{High}$, and the lemma follows arguing in the same way as we did in the previous paragraph.
\end{proof}

\begin{lemma} \label{lemma:X-props}
$X_i \in [1-b,b]$, $\E[X_i] = \E[Z_{i+1}]$, and $\big| \E[X_i \mid X_1,\dots,X_{i-1}] - \E[X_i] \big| \leq c$.
\end{lemma}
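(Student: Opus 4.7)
The plan is to prove each of the three claims in turn by unpacking the recursive definition
\[X_i = Z_{i+1,i+1} + \sum_{j\in C_{i+1}}\bigl(Z_{j,i+1} - \E[Z_{j,i+1}'\mid \bar\gamma_{i+1}]\bigr),\]
and combining properties (P1)--(P3) with Lemma~\ref{lemma:cond-indep}. The range $X_i\in[1-b,b]$ is immediate: by (P2), $Z_{i+1,i+1}\in[0,1]$; each summand $Z_{j,i+1} - \E[Z_{j,i+1}'\mid\bar\gamma_{i+1}]$ is a $[0,1]$-valued random variable minus a conditional mean in $[0,1]$, so lies in $[-1,1]$; and $|C_{i+1}|\le|B_{i+1}|\le b-1$. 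For the mean, I take unconditional expectations: each centering term contributes zero because $Z_{j,i+1}'$ has the same conditional distribution given $\bar\gamma_{i+1}$ as $Z_{j,i+1}$, and $\E[Z_{i+1,i+1}] = \E[Z_{i+1}]$ by (P2).

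For the conditional bound, I would proceed in two steps. First, I compute $\E[X_i\mid\bar\gamma_{i+1}]$ and observe that every centering term vanishes because $Z_{j,i+1}'$ was drawn from the conditional distribution of $Z_{j,i+1}$ given $\bar\gamma_{i+1}$, so $\E[Z_{j,i+1}\mid\bar\gamma_{i+1}] = \E[Z_{j,i+1}'\mid\bar\gamma_{i+1}]$. This leaves $\E[X_i\mid\bar\gamma_{i+1}] = \E[Z_{i+1,i+1}\mid\bar\gamma_{i+1}]$, which by (P2) is distributionally equal to $\E[Z_{i+1}\mid Z_j : j\in\bar C_{i+1}]$. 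Since $\bar C_{i+1}\subseteq\bar B_{i+1}$, the tower property yields
\[\E[Z_{i+1}\mid Z_j : j\in\bar C_{i+1}] = \E\bigl[\,\E[Z_{i+1}\mid Z_j : j\in\bar B_{i+1}]\,\bigm|\, Z_j : j\in\bar C_{i+1}\bigr],\]
and the hypothesis of Theorem~\ref{thm:martingale} bounds the inner conditional expectation within $c$ of $\E[Z_{i+1}]$ almost surely, a bound preserved under the outer conditional expectation. Second, I invoke Lemma~\ref{lemma:cond-indep} to obtain the conditional independence of $X_i$ from $(X_1,\ldots,X_{i-1})$ given $\bar\gamma_{i+1}$, so that iterated expectations give $\E[X_i\mid X_1,\ldots,X_{i-1}] = \E[\,\E[X_i\mid\bar\gamma_{i+1}]\mid X_1,\ldots,X_{i-1}]$, and the pointwise bound from the first step transfers to this quantity.

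The main obstacle is the mismatch between the hypothesis of Theorem~\ref{thm:martingale}, which conditions on \emph{all} non-influencers $\bar B_{i+1}$ (including indices above $i+1$), and the construction, which naturally conditions only on the earlier non-influencers $\bar C_{i+1} = \bar B_{i+1}\cap[i]$. The tower property resolves this because an almost-sure bound on $|\E[Z_{i+1}\mid\bar\beta_{i+1}] - \E[Z_{i+1}]|$ is inherited by any coarsening of the conditioning $\sigma$-algebra. Verifying this inheritance cleanly, and confirming that the conditional independence from Lemma~\ref{lemma:cond-indep} lets us swap $\bar\gamma_{i+1}$ for $(X_1,\ldots,X_{i-1})$ without loosening the constant $c$, is the only place where care is needed.
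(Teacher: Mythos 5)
Your proposal is correct and follows the same route as the paper: the range bound comes directly from counting terms in the definition of $X_i$; the centering terms vanish under conditioning on $\bar\gamma_{i+1}$, reducing $\E[X_i\mid\bar\gamma_{i+1}]$ to $\E[Z_{i+1,i+1}\mid\bar\gamma_{i+1}]$; property (P2) translates this to $\E[Z_{i+1}\mid Z_j: j\in\bar C_{i+1}]$; the tower property over $\bar C_{i+1}\subseteq\bar B_{i+1}$ applies the Theorem~\ref{thm:martingale} hypothesis; and Lemma~\ref{lemma:cond-indep} handles the swap from $\bar\gamma_{i+1}$-conditioning to $(X_1,\dots,X_{i-1})$-conditioning via iterated expectations. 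This is exactly the paper's argument, and the "main obstacle" you flag (earlier versus all non-influencers) is resolved exactly as the paper resolves it.
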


\begin{proof}
Since $X_m=0$, we may assume without loss of generality that $0\le i<m$. In particular:
\[X_i=Z_{i+1,i+1}+\sum_{j\in C_{i+1}}\big(Z_{j,i+1}-\E[Z_{j,i+1}'\mid\gamma_{i+1}]\big).\]
That is $X_i$ is the sum of at most $b$ variables in $[0,1]$, minus at most $b-1$ expectations in $[0,1]$; implying that $1-b\le X_i\le b$. 

Define $X:=(X_1,\ldots,X_{i-1})$. From Lemma~\ref{lemma:cond-indep}, $X_i$ and $X$ are conditionally independent given $\bar\gamma_{i+1}$. Hence, using a well-known property of conditional expectations, we find that
\begin{equation}\label{ide:useful}
\E[X_i \mid X_1,\dots,X_{i-1}]
=\E[X_i\mid X]
=\E[\E[X_i\mid \bar\gamma_{i+1},X]\mid X]
=\E[\E[X_i\mid \bar\gamma_{i+1}]\mid X].   
\end{equation}
But, from the definition of $X_i$:
\[\E[X_i\mid\bar\gamma_{i+1}]
=\E[Z_{i+1,i+1}\mid\bar\gamma_{i+1}]
+\sum_{j\in C_{i+1}}\big(\!\E[Z_{j,i+1}\mid\bar\gamma_{i+1}]-\E[Z_{j,i+1}'\mid \bar\gamma_{i+1}]\big).\]
From our recursive construction, however, $(Z_{j,i+1}':j\in C_{i+1})$ is a draw from the conditional distribution of  $(Z_{j,i+1}:j\in C_{i+1})$ given $\bar\gamma_{i+1}$. Hence, each term in the above summation above vanishes, implying that 
\begin{equation}\label{ide:useful2}
\E[X_i\mid\bar\gamma_{i+1}]=\E[Z_{i+1,i+1}\mid\bar\gamma_{i+1}].    
\end{equation}
In particular, $\E[X_i]=\E[Z_{i+1,i+1}]$, and (P2) implies that $\E[X_i]=\E[Z_{i+1}]$, as stated in the lemma. (P2) also implies that
\begin{align*}
\E[Z_{i+1,i+1} \mid \bar\gamma_{i+1}]
&= \E[Z_{i+1} \mid (Z_j : j \in \bar C_{i+1})]  \\
        &=    \E\left[ \E[Z_{i+1} \mid (Z_j : j \in\bar B_{i+1})] \mid (Z_j : j \in\bar C_{i+1}) \right]  \\
        &\leq    \E\left[ \E[Z_{i+1}] + c \mid (Z_j : j \in\bar C_{i+1}) \right]  \\
        &= \E[Z_{i+1}] + c,
\end{align*}
where for the above inequality we have use one of Theorem~\ref{thm:martingale}'s hypothesis. Finally, from (\ref{ide:useful})-(\ref{ide:useful2}), it follows that $\E[X_i\mid X_1,\dots,X_{i-1}]\leq\E[Z_{i+1}]+c$. Likewise, $\E[X_i\mid X_1,\dots,X_{i-1}]\geq\E[Z_{i+1}]-c$. Because we already showed $\E[Z_{i+1}] = \E[X_i]$, the lemma follows.
\end{proof}

This gives:
\begin{prop} \label{lemma:s-hat-expect}
$\Pr\left[ \left| \hat S_0 - \E[S] \right| \geq mc + b\sqrt{2m\ln(4/\delta)} \right] \leq \delta/2$.
\end{prop}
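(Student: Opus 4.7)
The plan is to build a supermartingale from $X_0,\ldots,X_{m-1}$ and appeal to Lemma~\ref{lemma:super-azuma}. Define $W_k := \sum_{i=0}^{k-1}(X_i - \E[X_i])$ for $k=0,\ldots,m$, so $W_0=0$. Using $X_m=0$ and $\E[X_i]=\E[Z_{i+1}]$ from Lemma~\ref{lemma:X-props}, one gets $\E[\hat S_0]=\sum_{i=0}^{m-1}\E[X_i]=\sum_{j=1}^m\E[Z_j]=\E[S]$, hence $W_m=\hat S_0-\E[S]$. So it suffices to show concentration of $W_m$ around $0$.

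To verify the hypotheses of Lemma~\ref{lemma:super-azuma}, I would use both conclusions of Lemma~\ref{lemma:X-props}. The bound $X_i\in[1-b,b]$ shows the increment $W_k-W_{k-1}=X_{k-1}-\E[X_{k-1}]$ lies in an interval of length at most $2b$, which is exactly what the Hoeffding step in the proof of Lemma~\ref{lemma:super-azuma} requires to produce the MGF factor $e^{\lambda^2 b^2/2}$. For the supermartingale property, note that $\sigma(W_1,\ldots,W_{k-1})=\sigma(X_0,\ldots,X_{k-2})$ since the $\E[X_i]$ are deterministic constants, so the tower property plus the conditional-mean bound in Lemma~\ref{lemma:X-props} gives $\bigl|\E[W_k-W_{k-1}\mid W_1,\ldots,W_{k-1}]\bigr|\le c$.

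Applying Lemma~\ref{lemma:super-azuma} with confidence parameter $\delta/4$ then yields $\Pr[\hat S_0-\E[S]\ge mc+b\sqrt{2m\ln(4/\delta)}]\le\delta/4$. The same argument applied to the sequence $(-W_k)$, which is also a $c$-supermartingale with the same increment structure (the lower half of the bound in Lemma~\ref{lemma:X-props} gives $\E[-(X_{k-1}-\E[X_{k-1}])\mid \text{past}]\le c$), produces the matching upper-tail bound on $-W_m$. A union bound over these two events closes the proof.

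The only real subtlety is reconciling the increment-bound formulation of Lemma~\ref{lemma:super-azuma} with what Lemma~\ref{lemma:X-props} supplies: the lemma is stated under $|W_i-W_{i-1}|\le b$, but here the increment $X_{k-1}-\E[X_{k-1}]$ is only known to lie in an interval of length at most $2b$, so its absolute value can exceed $b$. This is harmless because the proof of Lemma~\ref{lemma:super-azuma} invokes Hoeffding purely to control the conditional MGF of a bounded increment, and in its range form Hoeffding delivers the same constant $\lambda^2 b^2/2$ (with range $2b$). Once that bookkeeping is made explicit, the argument is a routine two-sided Azuma-Hoeffding application to the supermartingale built above.
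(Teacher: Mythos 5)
Your proof follows the paper's own argument essentially line for line: you build the same supermartingale $W_k=\sum_{i=0}^{k-1}(X_i-\E[X_i])$ from Lemma~\ref{lemma:X-props}, apply Lemma~\ref{lemma:super-azuma} with parameter $\delta/4$ to both $(W_k)$ and $(-W_k)$, and union bound. Your side remark is a valid catch — the increments $X_{k-1}-\E[X_{k-1}]$ are not literally bounded in absolute value by $b$ when $b>1$, only confined to an interval of length $2b-1\le 2b$, and as you observe the proof of Lemma~\ref{lemma:super-azuma} only needs the range form of Hoeffding, so the constant $\lambda^2 b^2/2$ and hence the stated bound still go through; the paper glosses over this point.
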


\begin{proof}
Define $W_i := \sum_{j=1}^i \big(X_i - \E[X_i]\big)$. From Lemma \ref{lemma:X-props}, $(W_i:i=0,\dots,m)$ is what we can call a $c$-supermartingale with $b$-bounded increments: $W_0 = 0$, $\E[W_i \mid W_0,\dots,W_{i-1}] \leq W_{i-1} + c$, and $|W_i - W_{i-1}| \leq b$.
Since $W_m = \hat S_0 - \E[S]$, Lemma \ref{lemma:super-azuma} implies that
\begin{equation}\label{ine:side1}
 \Pr\left[\hat S_0 - \E[S]\geq mc + b\sqrt{2m\ln(4/\delta)} \right] \leq \delta/4.   
\end{equation}
Likewise, since $(-W_i:i=0,\dots,m)$ is also a $c$-supermartingale with $b$-bounded increments, Lemma \ref{lemma:super-azuma} implies that
\begin{equation}\label{ine:side2}
\Pr\left[\E[S]-\hat S_0\geq mc + b\sqrt{2m\ln(4/\delta)} \right] \leq \delta/4.
\end{equation}
The proposition follows from equations (\ref{ine:side1})-(\ref{ine:side2}) by the sub-additive property of probabilities.
\end{proof}

Now we observe:
\begin{lemma} \label{lemma:E-props}
$E_i \in [-b,b]$ and $\E[E_i \mid E_{i+1},\dots,E_m] = 0$.
\end{lemma}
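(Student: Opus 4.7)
The plan is to verify both claims by unwinding the definition $E_i = \sum_{j \in C_{i+1}} \bigl(Z_{j,i+1}' - \E[Z_{j,i+1}' \mid \bar\gamma_{i+1}]\bigr)$ and applying the conditional independence structure already set up for the construction. The case $i = m$ is handled trivially from the base case $E_m = 0 \in [-b,b]$ and from the fact that $\E[E_m]=0$ makes the conditional mean identity vacuous, so I will focus on $0 \le i < m$.

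For the boundedness claim, I would just count terms. The hypothesis of Theorem~\ref{thm:martingale} gives $|C_{i+1}| \le |B_{i+1}| \le b-1$, and each summand $Z_{j,i+1}' - \E[Z_{j,i+1}' \mid \bar\gamma_{i+1}]$ lies in $[-1,1]$ because both $Z_{j,i+1}'$ and its conditional expectation take values in $[0,1]$. Summing, $|E_i| \le b-1 \le b$, so $E_i \in [-b,b]$.

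The substantive step is the zero-mean identity, and the plan is to reduce it to the auxiliary fact $\E[E_i \mid \bar\gamma_{i+1}] = 0$ via the conditional independence from Lemma~\ref{lemma:cond-indep}. The auxiliary fact itself is immediate from the recursive construction: conditional on $\bar\gamma_{i+1}$, the tuple $(Z_{j,i+1}': j \in C_{i+1})$ is an independent draw from the conditional distribution of $(Z_{j,i+1}: j \in C_{i+1})$ given $\bar\gamma_{i+1}$, so each centered term inside $E_i$ has conditional mean zero, and linearity finishes it off. Then, using that $E_i$ and $(E_{i+1},\dots,E_m)$ are conditionally independent given $\bar\gamma_{i+1}$ (Lemma~\ref{lemma:cond-indep}), the tower property gives
\[
\E[E_i \mid E_{i+1},\dots,E_m]
= \E\bigl[\E[E_i \mid \bar\gamma_{i+1}, E_{i+1},\dots,E_m] \,\big|\, E_{i+1},\dots,E_m\bigr]
= \E\bigl[\E[E_i \mid \bar\gamma_{i+1}] \,\big|\, E_{i+1},\dots,E_m\bigr]
= 0.
\]
This is the same two-step template used in the proof of Lemma~\ref{lemma:X-props} (condition first on $\bar\gamma_{i+1}$, then pull the inner expectation out using conditional independence), just applied to the ``leftover'' quantity $E_i$ instead of the ``mega-variable'' $X_i$.

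The main obstacle is really just bookkeeping: keeping straight which variables belong to $C_{i+1}$ versus $\bar C_{i+1}$, and making sure that the conditional independence statement of Lemma~\ref{lemma:cond-indep} is invoked with the right index. All the substantive probabilistic content was already paid for when constructing the independent copies $(Z_{j,i+1}': j \in C_{i+1})$ and establishing Lemma~\ref{lemma:cond-indep}. Once this lemma is in hand, $\sum_i E_i$ is a bounded-increment martingale, and an Azuma-style bound analogous to Proposition~\ref{lemma:s-hat-expect} will control $\hat S_0 - S$, which combined with that proposition gives the concentration of $S$ around $\E[S]$.
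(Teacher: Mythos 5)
Your proof is correct and follows essentially the same route as the paper: bound $|E_i|$ by counting the $|C_{i+1}| \le b-1$ centered summands each lying in $[-1,1]$, and establish the martingale-difference property by first showing $\E[E_i \mid \bar\gamma_{i+1}] = 0$ (each summand is centered by construction) and then pulling this through the tower property using the conditional independence of $E_i$ and $(E_{i+1},\dots,E_m)$ given $\bar\gamma_{i+1}$ from Lemma~\ref{lemma:cond-indep}. Your boundedness bound $|E_i| \le b-1$ is in fact slightly tighter than the paper's stated $|E_i| \le b$, though both suffice; otherwise the arguments coincide.
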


\begin{proof}
Since $E_m=0$, we may assume without loss of generality that $0\le i<m$. In this case:
\[E_i=\sum_{j\in C_{i+1}}\big(Z_{j,i+1}'-\E[Z_{j,i+1}'\mid\bar\gamma_{i+1}]\big).\]
Hence, $|E_i|\le b$, by similar arguments as we have given before. 

Define $E:=(E_{i+1},\ldots,E_m)$. Observe that
\[\E[E_i \mid E_{i+1},\dots,E_m]
= \E[E_i|E]
= \E \big[ \E[E_i | E, \bar\gamma_{i+1}] \mid E\big]
=\E\big[\E[E_i|\bar\gamma_{i+1}] \mid E\big], \]
where for the last identity we have used Lemma \ref{lemma:cond-indep}. But: 
\[\E[E_i \mid \bar\gamma_{i+1}]=\sum_{j \in C_i} \left( \E[Z'_{j,i+1} \mid \bar\gamma_{i+1}] - \E[Z'_{j,i+1} ] \mid \bar\gamma_{i+1} \right)=0,\]
hence $\E[E_i \mid E_{i+1},\dots,E_m]=0$, which shows the lemma.
\end{proof}

This gives:
\begin{prop} \label{lemma:s-hat-s}
    $\Pr\left[ \big| \hat S_0 - S \big| \geq b\sqrt{2m \ln(4/\delta)} \right] \leq \delta/2$.
\end{prop}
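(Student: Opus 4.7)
The plan is to express $\hat S_0 - S$ as a telescoping sum of the $E_i$'s, then use Lemma~\ref{lemma:E-props} to identify a martingale sum structure, and finally apply Lemma~\ref{lemma:super-azuma} with $c=0$ in both directions.

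First I would unwind the recursion. From the display defining $\hat S_{i-1}$ and $E_{i-1}$, we have $\hat S_{i-1} = \hat S_i + E_{i-1}$ for every $i\in[m]$. Since $X_m=0$ and $Z_{j,m}=Z_j$ by construction, $\hat S_m = \sum_{j=1}^m Z_j = S$. Telescoping from $i=m$ down to $i=1$ gives
\[ \hat S_0 - S \;=\; \sum_{i=0}^{m-1} E_i \;=\; \sum_{i=0}^{m} E_i, \]
where the last equality just uses $E_m=0$.

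Next I would set up the martingale in reverse order. Define $V_k := \sum_{i=m-k}^{m} E_i$ for $k=0,1,\dots,m$, so that $V_0=E_m=0$ and $V_m = \hat S_0 - S$. By Lemma~\ref{lemma:E-props}, each increment $V_k - V_{k-1} = E_{m-k}$ satisfies $|E_{m-k}|\le b$ and $\E[E_{m-k} \mid E_{m-k+1},\dots,E_m]=0$. Since $V_0,\dots,V_{k-1}$ are measurable functions of $(E_{m-k+1},\dots,E_m)$, the tower property gives $\E[V_k - V_{k-1}\mid V_0,\dots,V_{k-1}] = 0$. Thus $(V_k)_{k=0}^m$ is a $0$-supermartingale with $b$-bounded increments; by symmetry, so is $(-V_k)_{k=0}^m$.

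Finally I would apply Lemma~\ref{lemma:super-azuma} (with $c=0$) to each of $V_m$ and $-V_m$ at confidence $\delta/4$, obtaining
\[ \Pr\!\left[\,V_m \ge b\sqrt{2m\ln(4/\delta)}\,\right]\le \delta/4, \qquad \Pr\!\left[-V_m \ge b\sqrt{2m\ln(4/\delta)}\,\right]\le \delta/4, \]
and union-bound to conclude $\Pr\bigl[|\hat S_0 - S| \ge b\sqrt{2m\ln(4/\delta)}\bigr]\le \delta/2$, as required.

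The only nontrivial step is verifying the martingale property of the $E_i$'s in the reverse filtration, which is exactly the content of Lemma~\ref{lemma:E-props} and relies on the conditional independence supplied by Lemma~\ref{lemma:cond-indep}. Everything else is a direct telescoping identity and a two-sided invocation of the supermartingale Azuma bound already proved in the paper, so I do not expect any obstacle here.
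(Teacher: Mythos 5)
Your proof is correct and takes essentially the same route as the paper: unwind the recursion to get $\hat S_0 - S = \sum_{i=0}^m E_i$, observe via Lemma~\ref{lemma:E-props} that the partial sums of the $E_i$'s (in decreasing index order) form a martingale with $b$-bounded increments, and apply Lemma~\ref{lemma:super-azuma} two-sidedly with $c=0$. Your re-indexing $V_k = \sum_{i=m-k}^m E_i$ is a cleaner presentation of the same object the paper calls $W_i$, and your explicit tower-property step to pass from the filtration generated by $(E_{m-k+1},\dots,E_m)$ to that generated by $(V_0,\dots,V_{k-1})$ fills in a detail the paper leaves implicit.
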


\begin{proof}
Define $W_i := \sum_{j=i}^m E_i$. From Lemma~\ref{lemma:E-props}, $(W_i:i=m,\dots,0)$ is a $0$-supermartingale (i.e. a martingale) with $b$-bounded increments. Since $W_1 = \hat S_0 - S$, Lemma \ref{lemma:super-azuma} implies that $\Pr(\hat S_0-S\ge b\sqrt{2m\ln(4/\delta)})\le\delta/4$. Likewise, because $(-W_i:i=m,\dots,0)$ is a martingale with $b$-bounded increments, $\Pr(S-\hat S_0\ge b\sqrt{2m\ln(4/\delta)})\le\delta/4$, from which the proposition follows.
\end{proof}

Theorem~\ref{thm:martingale} is now a direct consequence of Propositions \ref{lemma:s-hat-expect} and \ref{lemma:s-hat-s}.
\end{proof}

\end{document}